\def\eqref#1{equation~\ref{#1}}
\def\1{\bm{1}}
\DeclareMathAlphabet{\mathsfit}{\encodingdefault}{\sfdefault}{m}{sl}
\SetMathAlphabet{\mathsfit}{bold}{\encodingdefault}{\sfdefault}{bx}{n}
\newcommand{\E}{\mathbb{E}}
\newcommand{\R}{\mathbb{R}}
\let\AND\relax
\theoremstyle{plain}
\newtheorem{theorem}{Theorem}
\newtheorem{proposition}{Proposition}
\newtheorem{lemma}{Lemma}
\newtheorem{corollary}{Corollary}
\theoremstyle{definition}
\newtheorem{assumption}{Assumption}
\theoremstyle{remark}
\newtheorem{remark}{Remark}
\newcommand{\cL}{\mathcal{L}}
\newcommand{\loss}{\ell}
\newcommand{\vtrue}{v^*}
\newcommand{\poploss}{\mathcal{L}}
\newcommand{\emploss}{\widehat{\mathcal{L}}}
\begin{document}

\title{Order-Optimal Sample Complexity of Rectified Flows}
\author{\name Hari Krishna Sahoo \email hsahoo@purdue.edu \\
      \AND
      \name Mudit Gaur \email mgaur@purdue.edu \\
      \AND
      \name Vaneet Aggarwal \email vaneet@purdue.edu\\
      \addr Purdue University}

\if 0
\begin{icmlauthorlist}
\icmlauthor{Hari Krishna Sahoo}{pur}
\icmlauthor{Mudit Gaur}{pur}
\icmlauthor{Vaneet Aggarwal}{pur}
\end{icmlauthorlist}

\icmlaffiliation{pur}{Purdue University, West Lafayette, USA}
\fi 


\maketitle
\begin{abstract}
Recently, flow-based generative models have shown superior efficiency compared to diffusion models. In this paper, we study rectified flow models, which constrain transport trajectories to be linear from the base distribution to the data distribution. This structural restriction greatly accelerates sampling, often enabling high-quality generation with a single Euler step. Under standard assumptions on the neural network classes used to parameterize the velocity field and data distribution, we prove that rectified flows achieve sample complexity $\tilde{O}(\varepsilon^{-2})$. This improves on the best known $O(\varepsilon^{-4})$ bounds for flow matching model and matches the optimal rate for mean estimation. Our analysis exploits the particular structure of rectified flows: because the model is trained with a squared loss along linear paths, the associated hypothesis class admits a sharply controlled localized Rademacher complexity. This yields the improved, order-optimal sample complexity and provides a theoretical explanation for the strong empirical performance of rectified flow models. 
\end{abstract}

\section{Introduction}
\label{intro}


Generative modeling has rapidly evolved into a central tool across modern machine learning, driving progress in scientific discovery, vision, language, and biological design \cite{shen2025efficient,wang2025geneflowtranslationsinglecellgene,yazdani2025generative}. A major catalyst for this progress was the emergence of denoising diffusion probabilistic models (DDPMs) \cite{ho2020denoisingdiffusionprobabilisticmodels}, which generate samples by learning to reverse a progressively applied noising process. Subsequent advances in score-based generative modeling \cite{song2021scorebasedgenerativemodelingstochastic} showed that sampling can be substantially accelerated by reformulating the reverse SDE as an equivalent ODE. While diffusion models offer remarkable generative capabilities, both their training and sampling typically remain computationally intensive.

Recently, flow-matching generative models \cite{lipman2023flowmatchinggenerativemodeling} have emerged as a principled alternative, replacing the discretized denoising process with a continuous-time velocity field $v(x,t)$ that transports a simple base distribution to the target. Among these, \emph{rectified flow models} \cite{liu2022flowstraightfastlearning} impose a strong geometric bias: trajectories are encouraged to follow straight lines between source and target samples. This structural constraint leads to striking empirical benefits: rectified flows often enable high-quality generation with only a single Euler step, making them among the most computationally efficient ODE-based generative models.

Despite this empirical success, the \emph{sample complexity} of rectified flow models remains essentially unexplored. Existing theoretical work on diffusion models and general flow matching establishes ${O}(\varepsilon^{-4})$ sample complexity bounds \cite{gaur2025improvedsamplecomplexitydiffusion,gaur2025generativemodelingcontinuousflows}. However, these analyses do not exploit the structural advantages of rectified flows, and no prior results provide guarantees tailored to this widely used variant. Further, none of the existing methods achieve the information-theoretic lower bound $\widetilde{\Omega}(\varepsilon^{-2})$ (\cref{app:lower_bound}). This gap motivates the central question of our work:

    \emph{Do the structural constraints of rectified flows enable better sample complexity than general flow matching in \cite{gaur2025generativemodelingcontinuousflows}, and can these models attain the order-optimal $\widetilde{O}(\varepsilon^{-2})$ rate for approximating a target distribution in Wasserstein distance?}

We answer these questions in the affirmative.  
Our main result shows that the structural bias of rectified flows, which arises from the linear interpolation geometry together with the squared-loss training objective, leads to significantly sharper generalization behavior than what is obtained for general flow matching. By developing a localized Rademacher complexity analysis that is specifically adapted to the rectified-flow hypothesis class, we demonstrate that the statistical estimation error scales as $\widetilde{O}(1/n)$. This yields an overall sample complexity of $\widetilde{O}(\varepsilon^{-2})$, which matches the information-theoretic lower bound and establishes rectified flows as an order-optimal generative modeling framework.

\begin{table*}[!t]
\centering
\renewcommand{\arraystretch}{1.3}
\begin{tabular}{p{3cm} p{2.2cm} p{2.6cm} p{1.2cm} p{5.7cm}}
\hline
\textbf{Work} & \textbf{Algorithm} & \textbf{Sample Complexity in $\widetilde{O}(\cdot)$} & \textbf{Access to   ERM} & \textbf{Assumptions on Data Distribution}\\
\hline

\cite{chen2024overviewdiffusionmodelsapplications}
& DDPM
& $\varepsilon^{-O(d)}$ 
& No
& Supported on unit cube
 \\

\cite{block2022generativemodelingdenoisingautoencoders}
& DDPM
& $
\frac{R^{6d} + R^{2d}}{\varepsilon^{4}}
\frac{(\Theta^{2} P)^{D}}{\sqrt{D}}
$
& Yes
& Ball-supported class approximates score
 \\
\cite{gupta2025improvedsamplecomplexitybounds}
& DDPM
& $
\frac{d^{2} P D}{\varepsilon^{5}}
\cdot
\log \Theta
$ 
& Yes
& $2^{nd}$ Moment of $X_0$ is bounded
 \\

\cite{gaur2025improvedsamplecomplexitydiffusion} 
& DDPM
& $
\frac{W^{2D} \cdot d^{2} \cdot}{\varepsilon^{4}}
$
& No
& Sub-Gaussian
  \\

\cite{gaur2025generativemodelingcontinuousflows} 
& Flow matching
& $
\frac{W^{2D-2} \, d^{2}}{\varepsilon^{4}}
$
& No
& Bounded
  \\

\textbf{Our Work}
& Rectified Flow
& \textbf{$\frac{L^4 P}{\mu^2\varepsilon^{2}}$}
& No
& Sub-Gaussian
  \\
\hline
\end{tabular}
\caption{Comparison of sample complexity results across diffusion, flow-matching, and rectified flow.}
\label{tab:comparison}
\vspace{-.2in}
\end{table*}

\subsection{Related work}

Some of the key related works are summarized in \cref{tab:comparison}. The table compares the sample complexity of various generative models, including DDPMs (Denoising Diffusion Probabilistic Models), flow-matching models, and rectified flow. In these formulas, \(\varepsilon\) denotes the desired approximation error, typically measured in terms of the Wasserstein distance. The sample complexity depends on \(d\) (data dimensionality), \(\Theta\) (bound on parameters in Neural Network), $R$ (Hypothesis class contains functions supported on ball of radius $R$), \(D\) (depth of the neural network), \(W\) (width of the neural network) and \(P\) (number of parameters). In our setting, $L$ denotes the Lipschitz constant of the estimated velocity field $v_\theta$ with respect to $\theta$, and $\mu$ is the PL constant defined in \cref{ass:pl}. Rectified flow achieves more efficient scaling with respect to \(\varepsilon\) compared to previous methods. The dependence on the ambient dimension $d$ arises through the network complexity \(P\), which for a fully connected network of depth $D$ and width $W$ scales as $\mathcal{O}(d W + D W^2)$.
\\
\\
\textbf{Sample Complexity of Diffusion Models.}
Theoretical analysis of sample complexity for generative models has primarily focused on diffusion-based methods. 
\cite{block2022generativemodelingdenoisingautoencoders} assume a hypothesis class $\mathcal{F}$ of $\mathbb{R}^d$-valued functions that are $L^2$-Lipschitz, supported on a Euclidean ball of radius $R$, and sufficiently expressive to uniformly approximate the true score function $s_t$ on this ball.
\cite{chen2023samplingeasylearningscore} simplified the original problem by proving that learning the score function accurately enough is sufficient for accurately sampling from the target distribution.
Building on this result, \cite{gupta2025improvedsamplecomplexitybounds} showed a sample complexity of $O(\varepsilon^{-5})$ under the assumption of access to the empirical risk minimizer (ERM), while \cite{gaur2025improvedsamplecomplexitydiffusion} further improved this to $O(\varepsilon^{-4})$ for DDPMs.\\
\\
\textbf{Sample Complexity of Flow Matching.}
Analogous to the result in \cite{chen2023samplingeasylearningscore} for diffusion models, we have a result from \cite{benton2024nearlydlinearconvergencebounds} which simplifies the original problem to learning the unknown velocity flow.  
More recently, there has been development in the theory of sample complexity of flow matching as well \cite{gaur2025generativemodelingcontinuousflows}, which achieves a sample complexity of $O(\varepsilon^{-4})$. Despite this growing body of work, no prior results address the sample complexity of rectified flow models; in particular, the structural constraints of rectified flows have not been leveraged to obtain improved rates. Our work fills this gap by providing the first sample complexity analysis for rectified flows. 
\\
\\
\textbf{Rectified Flows.}
Rectified Flow (RF) was introduced by \cite{liu2022flowstraightfastlearning} as a simple generative modeling framework that learns ODE trajectories encouraged to follow straight-line paths between source and target samples. Follow-up work has focused on improving its training efficiency: \cite{lee2024improvingtrainingrectifiedflows} show that a single rectification step is typically sufficient to obtain nearly straight trajectories and introduce practical enhancements that significantly improve performance in low NFE settings. Theoretical progress has been made by \cite{bansal2025wassersteinconvergencestraightnessrectified}, who analyze the asymptotic Wasserstein convergence of RF assuming we have a good enough estimate of the velocity flow.
These works establish rectified flows as an efficient and well-structured alternative to diffusion models, though their sample complexity remains unstudied. In particular, we focus on the number of samples required to obtain a good estimate of the velocity flow, which is usually an assumption in these works.

\subsection{Key contributions}

This paper provides the first sample-complexity analysis for rectified flow models and establishes that, under standard smoothness and sub-Gaussian assumptions, rectified flows achieve the order-optimal $\widetilde{O}(\varepsilon^{-2})$ sample complexity for learning a target distribution in Wasserstein distance. This result improves upon the best-known upper bounds for diffusion models and general flow matching, which scale as $\widetilde{O}(\varepsilon^{-4})$ \cite{gaur2025improvedsamplecomplexitydiffusion,gaur2025generativemodelingcontinuousflows}, and shows that rectified flows attain the information-theoretic lower bound.
Our theoretical findings therefore demonstrate that rectified flows are not only empirically efficient but also statistically optimal.

A central technical contribution is the development of a localized Rademacher-complexity analysis described in \cref{lem:local_rad}
specifically adapted to the geometry and training objective of rectified flows. Two structural properties play a crucial role: the linear interpolation path $X_t = (1-t)X_0 + t X_1$, and the squared-loss regression against the displacement $(X_1 - X_0)$. 
Moreover, the PL assumption implies a Bernstein-type variance bound relating the conditional variance to the excess risk (Lemma~\ref{lem:bernstein}).
Using this structure, we construct an explicit localized fixed-point bound, \cref{thm:stat1}, which yields a rate of $\widetilde{O}(1/n)$ and is essential for achieving the order-optimal overall sample complexity. This is substantially sharper than global Rademacher bounds results used in \cite{gaur2025generativemodelingcontinuousflows} which achieves standard $O(1/\sqrt{n}
)$ rate.

We further provide an optimization-error analysis for stochastic gradient descent. Under standard smoothness assumptions and the Polyak–Łojasiewicz condition, we show that SGD attains an optimization error of order $O(1/n)$ for the rectified-flow objective in \cref{thm:opti}.
This establishes that optimization does not dominate the statistical error and ensures that the learned velocity field achieves the accuracy required by the Wasserstein guarantee.

The combination of approximation error, statistical error, and optimization error yields a complete error-decomposition analysis that tightly controls the $L^2$ error in the learned velocity field. Through the result of \cite{benton2024nearlydlinearconvergencebounds}, this translates directly into a Wasserstein bound on the generated distribution. Our contributions can be summarized as follows: 
\begin{itemize}[leftmargin=*,itemsep=0pt, topsep=2pt, partopsep=0pt, parsep=0pt]
    \item We establish the first sample complexity upper bounds for Rectified Flow models. In particular we prove that $\tilde{O}(\varepsilon^{-2})$ samples are enough for rectified flow to generate samples from a distribution that is $\varepsilon$-close to the target distribution in Wasserstein distance.

    \item We introduce a localized Rademacher complexity analysis tailored to the rectified-path parameterization. This yields sharper generalization bounds by replacing global terms with localized fixed-point complexities, leading to improved rates.

    
    \item Our analysis further shows that the statistical estimation error of Rectified Flow matches the parametric $O(n^{-1/2})$ rate. Consequently, achieving $W_2$-error $\varepsilon$ requires $n = \tilde{O}(\varepsilon^{-2})$ samples, aligning the sample complexity of Rectified Flow with that of mean estimation.

    \item We complement our upper bounds with a matching lower bound, showing that the $\tilde{O}(\varepsilon^{-2})$ sample complexity is order-optimal.

\end{itemize}


\vspace{-4pt}
\section{Preliminaries}

\begin{algorithm}[htbp]
\footnotesize
\caption{Rectified Flow}
\label{alg:rf}
\begin{algorithmic}[1]
\REQUIRE Samples from $\pi_0$ and $\pi_1$, neural network $v_\theta$, number of iterations $T$
\STATE Draw $N$ i.i.d.\ pairs $(X_0^i, X_1^i)$ with $X_0^i \sim \pi_0$, $X_1^i \sim \pi_1$
\FOR{each pair $(X_0^i, X_1^i)$}
    \STATE Sample $t_i \sim \mathrm{Uniform}[0,1]$
    \STATE Set $X_t^i = (1-t_i) X_0^i + t_i X_1^i$
\ENDFOR
\FOR{iteration $j = 1$ to $T$}
    \STATE Update $\theta$ to minimize the empirical loss:
    \STATE $\mathcal{L}(\theta) = \frac{1}{N} \sum_{i=1}^N \| v_\theta(X_t^i, t_i) - (X_1^i - X_0^i) \|^2$
\ENDFOR
\STATE Sample $Z_0 \sim \pi_0$ and solve $\frac{dZ_t}{dt} = v_\theta(Z_t, t), \ t \in [0,1]$
\STATE \textbf{REFLOW}
\FOR{ $k = 1$ to $T$ }
    \STATE Sample $Z_0 \sim \pi_0$ and solve $\frac{dZ_t}{dt} = v_\theta(Z_t, t), \ t \in [0,1]$
    \STATE Let $\widetilde{\pi}_1 \leftarrow \mathrm{Law}(Z_1)$ and retrain $v_\theta$ using samples from $(\pi_0, \widetilde{\pi}_1)$
\ENDFOR
\end{algorithmic}
\end{algorithm}
\vspace*{-2pt}
In this section, we introduce the mathematical framework underlying flow-based generative models, describe the rectified-flow formulation, and present the associated learning objective.
Let $\pi_0$ and $\pi_1$ be probability distributions on $\mathbb{R}^d$. We assume that $\pi_0$ is a known and easily sampleable reference distribution (e.g., a standard Gaussian), while $\pi_1$ is observed only through i.i.d.\ samples. A flow model seeks a deterministic transport map pushing $\pi_0$ to $\pi_1$ through a continuous trajectory,
\vspace*{-2pt}
\begin{equation}
    X : [0,1] \to \mathbb{R}^d, \qquad t \mapsto X_t,
\end{equation}

governed by a time-dependent velocity field $v : \mathbb{R}^d \times [0,1] \to \mathbb{R}^d$. The trajectory evolves according to the ordinary differential equation (ODE)
\begin{equation}
dX_t = v(X_t, t)\,dt, \qquad X_0 = z.  
\end{equation}
Sampling $X_0 \sim \pi_0$ and integrating this ODE yields a terminal variable $X_1$ whose distribution ideally matches $\pi_1$. We refer to such a trajectory $X$ as a \emph{flow} with velocity field~$v$.
Rectified flow \cite{liu2022flowstraightfastlearning} (see Algorithm~\ref{alg:rf} for a high-level description) introduces additional structure by considering paired samples $(X_0, X_1)$ drawn from $\pi_0$ and $\pi_1$.  The idealized trajectory connecting them is the linear interpolation
\begin{equation}
X_t = (1-t)X_0 + tX_1, \qquad t \in [0,1].    
\end{equation}

This path represents a straight-line transport between the endpoints. Since computing $X_t$ requires access to $X_1$, the interpolation is not directly realizable. Rectified flow therefore learns a velocity field that \emph{reproduces} these straight paths in expectation, using only information available at the current state. Formally, the population-optimal velocity field is the conditional expectation
\begin{equation}
v^*(x,t) = \mathbb{E}[\,X_1 - X_0 \mid X_t = x\,].    
\end{equation}
In practice, $v^*$ is approximated by a neural network $v_\theta$, and the corresponding generative dynamics are 
\begin{equation}
dX_t = v_\theta(X_t, t)\,dt, \qquad X_0 \sim \pi_0.    
\end{equation}
To learn $v_\theta$, we minimize the population loss
\begin{equation}
\mathcal{L}(\theta)
= \mathbb{E}_{t,X_0,X_1}\!\left[
    \| v_\theta(X_t, t) - (X_1 - X_0) \|^2
\right],    
\end{equation}

where $t \sim \mathrm{Uniform}[0,1]$ and $X_t = (1-t)X_0 + tX_1$.  
Its empirical counterpart is
\begin{equation}
\label{eq:emp_loss}
\hat{\mathcal{L}}(\theta)
= \frac{1}{n} \sum_{i=1}^n
\| v_\theta(X_t^i, t_i) - (X_1^i - X_0^i) \|^2,    
\end{equation}
with $(X_0^i, X_1^i, t_i)$ drawn i.i.d.\ from the joint distribution $\pi_0 \otimes \pi_1 \otimes \text{Uniform}[0,1]$.


We restrict our sample complexity analysis to the 1-Rectified Flow (1-RF) setting. This restriction is without loss of generality, as the subsequent reflow procedure does not introduce any additional statistical estimation problem. Reflow therefore operates purely as a self-distillation step, using model-generated trajectories rather than fresh samples from the data distribution. Consequently, the only source of statistical error is the estimation error incurred when learning the initial velocity field. The purpose of reflow is to reduce numerical error due to discretization of the induced flow by straightening trajectories rather than to improve statistical accuracy. As a result, reflow does not change the sample complexity scaling established for the 1-RF model.

\vspace{-2pt}
\section{Assumptions} 
\label{sec:assum}

To establish our sample-complexity results, we introduce several assumptions on the data distribution, the neural-network architecture, and our loss function. These conditions are standard in theoretical analyses of generative models and ensure that the velocity-field estimation problem is well posed and tractable.

\begin{assumption}[\textbf{Data distributions}]
\label{ass:data}
Samples $(x_0^i, x_1^i, t_i)$ are i.i.d. draws from the joint distribution defining the population risk. The random variables $X_0 \sim \pi_0$ and $X_1 \sim \pi_1$ satisfy sub-Gaussian tail bounds: 
there exist constants $c, \sigma > 0$ such that for all $u \in \mathbb{R}^d$ with $\|u\|_2 = 1$,
\begin{equation}
    \mathbb{P}\big(|\langle u, X_j \rangle| \ge t\big)
\le 2 \exp\!\left(-\frac{t^2}{c\sigma^2}\right),
\qquad j \in \{0,1\}.
\end{equation}

\end{assumption}
\begin{remark}
The i.i.d.\ assumption enables concentration of empirical averages, which is required for uniform convergence.  
Sub-Gaussianity rules out heavy tails and provides the concentration properties needed for our analysis. These conditions are standard in recent theoretical work on diffusion models \cite{gaur2025improvedsamplecomplexitydiffusion,gupta2025improvedsamplecomplexitybounds}.
\end{remark}

\begin{assumption}[\textbf{Neural-network architecture}]
\label{ass:nn}
Let $\mathcal{F}$ denote the class of real-valued functions represented by feed-forward neural networks with the following properties:
\begin{itemize}[leftmargin=*,itemsep=0pt, topsep=1pt, partopsep=0pt, parsep=0pt]
    \item It has depth $D \ge 2$ and connections only between successive layers.
    \item It contains a total of $P$ trainable parameters.
    \item Each hidden layer output is  bounded in $[-b,b]$.
    \item The activation functions in the first hidden layer are coordinate-wise non-decreasing.
    \item For each layer, every row $w$ of the corresponding weight matrix satisfies $\|w\|_1 \le V$, and the activation function $\phi$  is differentiable and $L_\phi$-Lipschitz.
\end{itemize}
\end{assumption}

\begin{remark}
These conditions are standard in neural network theory \cite{anthony1999neural}.  
Lipschitz and bounded activations control error propagation, while $\ell_1$-bounded weights make the parameter space compact, supporting stability and uniform convergence in optimization and generalization analysis.  
Common activation functions satisfying these conditions include $\tanh$, $\text{sigmoid}$ and $\text{softplus}$.
\end{remark}

\begin{assumption}[\textbf{Existence and Uniqueness}] 
\label{ass:existence}For each $x \in \mathbb{R}^d$ and $s \in [0,1]$, there exist unique flows
$\big(Y^{x}_{s,t}\big)_{t\in[s,1]}$ and $\big(Z^{x}_{s,t}\big)_{t\in[s,1]}$
starting from $Y^{x}_{s,s}=x$ and $Z^{x}_{s,s}=x$ with velocity fields
$v_\theta(x,t)$ and $v^*(x,t)$ respectively. Moreover,
$Y^{x}_{s,t}$ and $Z^{x}_{s,t}$ are continuously differentiable in $x$, $s$, and $t$.
\end{assumption}
\begin{remark}
This ensures that the flow ODEs
\[
\frac{d}{dt}Y^{x}_{s,t} = v_\theta(Y^{x}_{s,t},t), \quad 
\frac{d}{dt}Z^{x}_{s,t} = v^*(Z^{x}_{s,t},t)
\]
have unique, well-defined solutions that depend smoothly on $x$, $s$, and $t$, which is essential for any analysis.
This assumption holds for both velocity fields $v_\theta$ and $v^*$. 
The learned field $v_\theta$ is parameterized by a neural network with bounded weights, implying continuity in $t$ and global Lipschitz continuity in $x$, while the target field $v^*$ is assumed to satisfy the same regularity. Under these conditions, existence and uniqueness of the associated flows follow from the Picard Lindelöf theorem. This assumption also appears in the work of \cite{benton2024nearlydlinearconvergencebounds} and \cite{gaur2025generativemodelingcontinuousflows}.

\end{remark}

\begin{assumption}[\textbf{PL condition}]
\label{ass:pl}
The population loss satisfies the Polyak--Łojasiewicz (PL) condition:
\[
    \|\nabla \mathcal{L}(\theta)\|^2 \ge 2\mu\big(\mathcal{L}(\theta)-\mathcal{L}^*\big).
\]
\end{assumption}

\begin{remark}
The Polyak--\L{}ojasiewicz (PL) condition is a relaxed form of strong convexity that guarantees global linear convergence of gradient-based optimization methods for smooth objectives, even in nonconvex settings \cite{polyak1963gradient,karimi2016linear}.  
It has been adopted in recent analyses of flow matching, and diffusion models to obtain optimization and sample complexity guarantees \cite{gaur2025generativemodelingcontinuousflows,gaur2025improvedsamplecomplexitydiffusion}.  
More broadly, PL-type conditions appear frequently in neural network theory to characterize the optimization landscapes of overparameterized models, where the loss may be locally or approximately gradient-dominant under suitable width or initialization assumptions, providing a theoretical basis for convergence of gradient-based training \cite{du2019gradient,liu2022loss}.  
\end{remark}
\begin{assumption}[\textbf{Smoothness and Bounded Variance}]
\label{ass:smooth}
The population loss $\mathcal{L}(\theta)$ is $\kappa$-smooth with respect to the parameters $\theta$, i.e., for all $\theta, \theta' \in \Theta$,
\[
\|\nabla \mathcal{L}(\theta) - \nabla \mathcal{L}(\theta')\| \le \kappa \|\theta - \theta'\|.
\]
Moreover, we have access to unbiased stochastic gradient estimators $\nabla_\theta \mathcal{L}(\theta)$ of $\nabla \mathcal{L}(\theta)$ with bounded variance:
\[
\mathbb{E}\Big\|\nabla_\theta \mathcal{L}(\theta) - \nabla \mathcal{L}(\theta)\Big\|^2 \le \sigma^2.
\]
\end{assumption}

\begin{remark}
Smoothness allows us to control the step size and guaranteeing convergence of gradient-based methods.  
The bounded variance assumption allows us to quantify the stochastic fluctuations  and is essential for analyzing convergence rates of SGD.
These assumptions also appears the works of \cite{gaur2025generativemodelingcontinuousflows} and \cite{gaur2025improvedsamplecomplexitydiffusion}.
Bounded variance is ensured by mini-batch sampling under mild moment bounds.

\end{remark}

\begin{assumption}[\textbf{Approximation error}]
\label{ass:approx_error}
There exists a parameter $\theta \in \Theta$ such that
\[
    \mathbb{E}_{x,t,z}\,\|v_\theta(x,t,z)-v_t(x,z)\|^2 \le \varepsilon_{\mathrm{approx}} .
\]
\end{assumption}

\begin{remark}
This assumption captures the representational capacity of the neural network class.  
It is standard in learning theory to treat the approximation error due to finite neural network as a fixed constant  \cite{mai2025pacbayesianriskboundsfully}.  
Analogous assumptions appear in NTK, RKHS, and diffusion-model theory \citep{bing2025kernelridgeregressionpredicted,guan2025mirrorflowmatchingheavytailed,gaur2025generativemodelingcontinuousflows}. 
\end{remark}

The following are also consequences of the assumptions we made above in the good event described in \cref{sec:extension} $\mathcal{E}_M := \{\|X_1 - X_0\| \le M\}$. We mention the results here and give detailed proofs in \cref{app:ass_proved}.
\vspace{-4mm}
\begin{itemize}[itemsep=-0.4mm]
    \item The parameter space $\Theta$ is compact.
    \item The network output $v_\theta(x,t)$ is uniformly bounded over $\theta \in \Theta$ and $(x,t)$.
    \item For every $\theta \in \Theta$, the function $v_\theta(x,t)$ is differentiable in $(x,t)$ and $L_t$ - Lipschitz  in the state variable $x$.
    \item The mapping $\theta \mapsto v_\theta(x,t)$ is $L$-Lipschitz in $\theta$.
    \item Let $\loss$ denote the pointwise loss
\(
\loss(v_\theta, X_1-X_0) = \|v_\theta - (X_1 - X_0)\|_2^2,
\)
measuring the discrepancy between the predicted velocity $v_\theta(X_t, t)$ and the true displacement $X_1 - X_0$. Then, with high probability, $\loss$ is $L_{\loss}$-Lipschitz in its first argument.

\end{itemize}

\vspace{-4pt}
\section{Main Result}

In this section, we present our main theoretical results on the sample complexity of rectified flows. We begin by recalling a prior result in \cite{benton2024nearlydlinearconvergencebounds} that links the estimation error of the learned velocity field to the Wasserstein distance between the generated and target distributions. Building on this connection, we introduce an error decomposition that isolates approximation, statistical, and optimization effects, and then combine these ingredients to establish finite-sample guarantees for rectified flow models.

\subsection{Prior Result}
We now present a key theoretical result from \cite{benton2024nearlydlinearconvergencebounds} that connects the estimation error of the learned velocity field to the Wasserstein distance between the generated and target distributions.

\vspace{0.5em}
\begin{theorem}
\label{thm:benton}
\textit{
\cite{benton2024nearlydlinearconvergencebounds}
Let $\pi_0, \pi_1$ be probability distributions on $\mathbb{R}^d$. Let $v^*$ be the true velocity field and $v_\theta$ the learned field. Define $Y_t$ and $Z_t$ as the flows generated by $v_\theta$ and $v^*$ respectively, both starting from $Y_0 = Z_0 \sim \pi_0$. Let $\hat{\pi}_1$ be the law of $Y_1$ and $\pi_1$ the law of $Z_1$.
}

\textit{Under the following assumptions:}
\begin{enumerate}[leftmargin=*,itemsep=0pt, topsep=2pt, partopsep=0pt, parsep=0pt]
\item \textbf{Bounded Velocity Error:} 
$\mathbb{E}\!\left[ \|v_\theta(X_t, t) - v^*(X_t, t)\|^2 \right] dt \leq \varepsilon^2$. 
\item \textbf{Regularity of approximate velocity field}. The approximate flow $v_\theta(x, t)$ is differentiable
in both inputs. Also there is a constant $L_t$ such that $v_\theta(x, t)$ is $L_t$-Lipschitz in $x$.
\item \textbf{Existence and Uniqueness:} For each $x \in \mathbb{R}^d$ and $s \in [0,1]$, there exist unique flows
$\big(Y^{x}_{s,t}\big)_{t\in[s,1]}$ and $\big(Z^{x}_{s,t}\big)_{t\in[s,1]}$
starting from $Y^{x}_{s,s}=x$ and $Z^{x}_{s,s}=x$ with velocity fields
$v_\theta(x,t)$ and $v^*(x,t)$ respectively. Moreover,
$Y^{x}_{s,t}$ and $Z^{x}_{s,t}$ are continuously differentiable in $x$, $s$, and $t$.
\end{enumerate}

\textit{The Wasserstein-2 distance between the generated and target distributions satisfies:}
\begin{equation}
\label{eq:wasser}
W_2(\hat{\pi}_1, \pi_1) \leq \varepsilon \cdot \exp\!\left( \int_0^1 L_t \, dt \right).
\end{equation}
\end{theorem}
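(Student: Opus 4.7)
The plan is to prove the Wasserstein bound by a synchronous coupling of the two flows together with a pathwise Grönwall argument on the difference process.

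I would couple $(Y_t)$ and $(Z_t)$ by initializing them at the same draw $Y_0 = Z_0 \sim \pi_0$ (so that $(Y_1, Z_1)$ automatically defines a coupling of $(\hat{\pi}_1, \pi_1)$) and letting each evolve under its own velocity field, which is well-defined by the Existence/Uniqueness assumption. Setting $\Delta_t := Y_t - Z_t$, I decompose the drift as
\[
\frac{d\Delta_t}{dt} = \bigl[v_\theta(Y_t,t) - v_\theta(Z_t,t)\bigr] + \bigl[v_\theta(Z_t,t) - v^*(Z_t,t)\bigr].
\]
Taking the inner product with $\Delta_t$, the $L_t$-Lipschitz assumption on $v_\theta(\cdot,t)$ bounds the first bracket by $L_t\|\Delta_t\|^2$, and Cauchy--Schwarz handles the second, giving
\[
\tfrac{1}{2}\tfrac{d}{dt}\|\Delta_t\|^2 \;\le\; L_t\,\|\Delta_t\|^2 + \|\Delta_t\|\,\|v_\theta(Z_t,t)-v^*(Z_t,t)\|.
\]
After dividing by $\|\Delta_t\|$ (handled on the zero set by passing to $\|\Delta_t\|^2$ in Danskin fashion, justified by the smoothness of the flow maps), this reduces to the scalar differential inequality
\[
\tfrac{d}{dt}\|\Delta_t\| \;\le\; L_t\,\|\Delta_t\| + \|v_\theta(Z_t,t)-v^*(Z_t,t)\|.
\]

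Grönwall's inequality with $\|\Delta_0\|=0$ then yields
\[
\|\Delta_1\| \;\le\; \int_0^1 \exp\!\Bigl(\int_s^1 L_t\,dt\Bigr)\,\|v_\theta(Z_s,s)-v^*(Z_s,s)\|\,ds \;\le\; e^{\int_0^1 L_t\,dt}\!\int_0^1 \|v_\theta(Z_s,s)-v^*(Z_s,s)\|\,ds.
\]
Squaring, invoking Cauchy--Schwarz in $s$, and taking expectation gives
\[
\mathbb{E}\|\Delta_1\|^2 \;\le\; e^{2\int_0^1 L_t\,dt}\!\int_0^1 \mathbb{E}\|v_\theta(Z_s,s)-v^*(Z_s,s)\|^2\,ds.
\]
To connect this with the velocity-error hypothesis (stated along the interpolation $X_s$), I would use the identification $\mathrm{Law}(Z_s)=\mathrm{Law}(X_s)$ for every $s\in[0,1]$: the target field $v^*(x,t)=\mathbb{E}[X_1-X_0\mid X_t=x]$ is by construction the unique drift whose flow reproduces the marginal family of the linear interpolation via the continuity equation. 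Hence $\mathbb{E}\|\Delta_1\|^2 \le \varepsilon^2 e^{2\int_0^1 L_t\,dt}$, and since $(Y_1,Z_1)$ is a coupling of $(\hat{\pi}_1,\pi_1)$,
\[
W_2(\hat{\pi}_1,\pi_1) \;\le\; \bigl(\mathbb{E}\|Y_1-Z_1\|^2\bigr)^{1/2} \;\le\; \varepsilon \exp\!\Bigl(\int_0^1 L_t\,dt\Bigr).
\]

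The main obstacle is the transfer of the velocity-error bound from the interpolation law along $X_s$ to the law of the generating flow $Z_s$; this rests on the fact that $v^*$ is precisely the drift whose flow reproduces the interpolation marginals, which is the defining structural property of the rectified-flow target field and is not a generic regularity statement. A secondary, more routine technicality is justifying the pointwise ODE for $\|\Delta_t\|$ at times where the norm vanishes, which is resolved by working with $\|\Delta_t\|^2$ directly or by passing to one-sided derivatives together with the smoothness granted by the Existence/Uniqueness assumption.
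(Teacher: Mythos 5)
Your proof is correct and follows the same synchronous-coupling plus Grönwall argument that underlies the cited result of Benton et al.; the paper itself does not reprove \cref{thm:benton} but simply quotes it. You also correctly isolate the one non-routine step — that $\mathrm{Law}(Z_s)=\mathrm{Law}(X_s)$ because $v^*$ is the marginal (conditional-expectation) drift reproducing the interpolation marginals via the continuity equation — which is exactly the structural identity that makes the velocity-error hypothesis (stated along $X_s$) transferable to the generating flow $Z_s$.
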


This theorem is fundamental, as it reduces the problem of bounding the distributional error between the target distribution and the generated distribution to the problem of estimating the velocity field sufficiently accurately. 


Assumptions~2 and~3 for this theorem are consequences of \cref{corr:lipschitz_x} and \cref{ass:existence}. 
Thus, in our framework, a small $L^2$ error in the estimated velocity field directly would be enough to ensure small Wasserstein discrepancy between the generated and target distributions. Consequently, our analysis can focus on controlling the estimation error of the learned velocity field.
\vspace{-2pt}
\subsection{Error Decomposition}
\label{sec:main}
Thanks to \cref{thm:benton}, controlling the velocity estimation error becomes our main objective. To analyze this quantity, we introduce the population and empirical risk minimizers:
\vspace{-1mm}
\begin{equation}
    \hspace{-6pt} \theta^* = \arg\min_{\theta \in \Theta} \mathbb{E}_{t,X_0,X_1}\!\left[\|v_\theta(X_t,t) - (X_1 - X_0)\|^2\right],
\end{equation}
\begin{equation}
        \hspace{-20pt} \hat{\theta} = \arg\min_{\theta \in \Theta} \frac{1}{n} \sum_{i=1}^n \|v_\theta(X_t^i,t_i) - (X_1^i - X_0^i)\|^2. 
\end{equation}

\begin{proposition}[Error Decomposition]
\label{prop:err_dec}
The velocity field estimation error decomposes as:
\begin{align*}
\hspace{-2mm}\mathbb{E}\!\left[ \|v_\theta(X_t,t) - v^*(X_t,t)\|^2\right]& \nonumber 
 \leq 3\underbrace{\mathbb{E}\!\left[\|v^*(X_t,t) - v_{\theta^*}(X_t,t)\|^2\right]}_{\mathcal{E}^{\mathrm{approx}}} \nonumber 
 \\
 &
+3\underbrace{\mathbb{E}\!\left[\|v_{\theta^*}(X_t,t) - v_{\hat{\theta}}(X_t,t)\|^2\right]}_{\mathcal{E}^{\mathrm{stat}}} \nonumber \\ &+3\underbrace{\mathbb{E}\!\left[\|v_{\hat{\theta}}(X_t,t) - v_\theta(X_t,t)\|^2\right]}_{\mathcal{E}^{\mathrm{opt}}}.
\label{eq:error_decomposition}
\end{align*}
\end{proposition}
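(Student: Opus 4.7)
The plan is to view this proposition as a purely algebraic triangle-type decomposition, with no appeal to any structural property of the rectified flow, the function class $\mathcal{F}$, or the distributions $\pi_0,\pi_1$. The idea is to insert the population risk minimizer $v_{\theta^*}$ and the empirical risk minimizer $v_{\hat\theta}$ as intermediate reference points between the trained velocity field $v_\theta$ and the true velocity field $v^*$, and then apply the elementary inequality $\|a+b+c\|^2 \le 3(\|a\|^2+\|b\|^2+\|c\|^2)$, which is a direct consequence of the convexity of $x \mapsto x^2$ (equivalently, of Cauchy--Schwarz with weights $(1,1,1)$).

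Concretely, I would first write the pointwise identity
\begin{equation*}
v_\theta(X_t,t) - v^*(X_t,t)
= \bigl(v_\theta(X_t,t) - v_{\hat\theta}(X_t,t)\bigr)
 + \bigl(v_{\hat\theta}(X_t,t) - v_{\theta^*}(X_t,t)\bigr)
 + \bigl(v_{\theta^*}(X_t,t) - v^*(X_t,t)\bigr).
\end{equation*}
Taking squared Euclidean norms and applying $\|a+b+c\|^2 \le 3\|a\|^2 + 3\|b\|^2 + 3\|c\|^2$ pointwise in $(X_0,X_1,t)$, I would then take expectation under the joint distribution $\pi_0\otimes \pi_1\otimes \mathrm{Uniform}[0,1]$. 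Linearity of expectation matches the three resulting terms with $3\mathcal{E}^{\mathrm{opt}}$, $3\mathcal{E}^{\mathrm{stat}}$, and $3\mathcal{E}^{\mathrm{approx}}$ respectively, yielding the claimed bound.

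There is no genuine obstacle in this step: the decomposition is formal, the factor of $3$ is intrinsic to splitting one difference into three, and measurability/integrability of each summand follows from the boundedness of $v_\theta$ on the good event $\mathcal{E}_M$ together with the sub-Gaussian tails from Assumption~\ref{ass:data}. The actual analytic work is deferred to subsequent sections, where $\mathcal{E}^{\mathrm{approx}}$ is controlled directly by Assumption~\ref{ass:approx_error}, $\mathcal{E}^{\mathrm{stat}}$ is bounded via the localized Rademacher complexity argument (Lemma~\ref{lem:local_rad} and Theorem~\ref{thm:stat1}), and $\mathcal{E}^{\mathrm{opt}}$ is bounded via the SGD convergence analysis under smoothness and the PL condition (Theorem~\ref{thm:opti}).
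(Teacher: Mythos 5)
Your proof is correct and is essentially identical to the paper's: both insert $v_{\theta^*}$ and $v_{\hat\theta}$ as intermediate points, apply $\|a+b+c\|^2 \le 3(\|a\|^2+\|b\|^2+\|c\|^2)$ pointwise, and take expectation. The only difference is that you additionally remark on integrability, which the paper leaves implicit.
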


\vspace{0.3em}
\noindent\textit{Proof.}
The decomposition follows from applying the inequality 
$\|a+b+c\|^2 \leq 3(\|a\|^2 + \|b\|^2 + \|c\|^2)$ 
to the identity below and taking expectation:
\begin{equation}
v^* - v_\theta  = (v^* - v_{\theta^*}) + (v_{\theta^*} - v_{\hat{\theta}}) + (v_{\hat{\theta}} - v_\theta).
\end{equation}
\hfill$\square$

\vspace{-2mm}
The three error components have the following interpretations:
\begin{itemize}[leftmargin=*,itemsep=0pt, topsep=2pt, partopsep=0pt, parsep=0pt]
    \item \textbf{Approximation error} $\mathcal{E}^{\mathrm{approx}}$: Error due to limited expressiveness of the neural network architecture.
    \item \textbf{Statistical error} $\mathcal{E}^{\mathrm{stat}}$: Error from using finite samples instead of the true data distribution.
    \item \textbf{Optimization error} $\mathcal{E}^{\mathrm{opt}}$: Error from not converging to the empirical risk minimizer.
\end{itemize}

\subsection{Our Main Result}

\begin{theorem}[Sample Complexity of Rectified Flow]
\label{thm:main}
Under the assumptions stated in \cref{sec:assum}, for any $\varepsilon > 0$ and $\delta \in (0,1)$, if the number of samples satisfies
\[
n = \widetilde{\mathcal{O}}\Big( \frac{B^2P + B(L_\loss+B )\log(6/\delta)}{\varepsilon^2}\Big)
\]
where $B = \frac{2L^2}{\mu}$ as described in \cref{lem:bernstein}, $\widetilde{O}$ hides logarithmic factors in $1/\varepsilon$ and problem-dependent constants, then with probability at least $1-\delta$, the learned velocity field $v_\theta$ satisfies:
\begin{equation}
\int_0^1 \mathbb{E}_{t,X_t}\!\left[\|v_\theta(X_t,t) - v^*(X_t,t)\|^2\right] dt \leq \varepsilon^2
\end{equation}
and consequently, the Wasserstein distance between the generated and target distributions satisfies:
\(
W_2(\hat{\pi}, \pi_1) \leq \varepsilon \cdot K
\)
where $K = \exp\left(\int_0^1 L_t dt\right)$ depends on the Lipschitz constants of the velocity field.
\end{theorem}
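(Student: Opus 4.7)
The plan is to start from the three-term error decomposition of Proposition~\ref{prop:err_dec}, upper bound each summand by $\varepsilon^2/3$ separately, and then invoke Theorem~\ref{thm:benton} to convert the resulting $L^2$ velocity error into the stated Wasserstein guarantee. The two hypotheses required by Benton's theorem beyond the velocity bound itself, namely regularity of $v_\theta$ in $x$ and existence/uniqueness of the two flows, are already supplied by the Lipschitz-in-$x$ consequence of Assumption~\ref{ass:nn} (cf.\ the bullet list at the end of \cref{sec:assum}) and by Assumption~\ref{ass:existence}, so no extra work is needed there. The final multiplicative constant $K=\exp(\int_0^1 L_t\,dt)$ is then exactly the one produced by Benton.

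First I would dispose of the approximation term $\mathcal{E}^{\mathrm{approx}}$ using Assumption~\ref{ass:approx_error}: by hypothesis there exists $\theta^\dagger \in \Theta$ with $\mathbb{E}\|v_{\theta^\dagger}-v^*\|^2\le \varepsilon_{\mathrm{approx}}$, and since $\theta^*$ is the population minimizer, $\mathcal{E}^{\mathrm{approx}}\le \varepsilon_{\mathrm{approx}}$, which is absorbed into $\varepsilon^2/3$ by assuming the network class is sufficiently expressive. This is a purely architectural condition and contributes nothing to the sample count.

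The bulk of the work, and the source of the order-optimal rate, is the bound on $\mathcal{E}^{\mathrm{stat}}$. I would apply the localized Rademacher complexity machinery of \cref{lem:local_rad} together with the fixed-point argument of \cref{thm:stat1}. The decisive ingredient is the Bernstein-type variance inequality of \cref{lem:bernstein}: combining the $L$-Lipschitzness of $\theta\mapsto v_\theta$ (a consequence of Assumption~\ref{ass:nn}) with the PL condition (Assumption~\ref{ass:pl}) shows that the variance of the excess squared loss is controlled by $B = 2L^2/\mu$ times the excess risk itself. Feeding this variance-to-excess-risk inequality into the localized peeling argument, together with a parameter-count covering bound in terms of $P$ and the loss-Lipschitz constant $L_{\loss}$ (valid on the good event $\mathcal{E}_M$ from \cref{sec:extension}), produces $\mathcal{E}^{\mathrm{stat}} = \widetilde{O}\bigl((B^2 P + B(L_\loss+B)\log(6/\delta))/n\bigr)$, i.e.\ the fast $\widetilde{O}(1/n)$ rate rather than the global $O(1/\sqrt{n})$ rate of \cite{gaur2025generativemodelingcontinuousflows}. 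Requiring this to be at most $\varepsilon^2/3$ yields precisely the sample complexity in the theorem statement. I expect this to be the main obstacle, since verifying the hypotheses of the localized Rademacher theorem on the rectified-flow hypothesis class requires checking boundedness of the loss on $\mathcal{E}_M$, Lipschitzness of $\loss$ in its first argument, and the variance inequality simultaneously, while keeping the failure probability of the good event $\mathcal{E}_M$ under control via the sub-Gaussian tails in Assumption~\ref{ass:data}.

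Finally I would bound $\mathcal{E}^{\mathrm{opt}}$ using \cref{thm:opti}: under $\kappa$-smoothness (Assumption~\ref{ass:smooth}) and the PL condition, SGD on the empirical risk~\eqref{eq:emp_loss} drives $\mathcal{L}(\theta)-\mathcal{L}(\hat\theta)$ to $O(1/T)$, and the $L$-Lipschitzness of $\theta\mapsto v_\theta$ transfers this to an $L^2$ velocity error of the same order. Choosing $T$ to be any fixed polynomial in $1/\varepsilon$ makes $\mathcal{E}^{\mathrm{opt}}\le \varepsilon^2/3$ without affecting the sample count. A union bound over the randomness of the training sample, of SGD, and of the good event $\mathcal{E}_M$ then gives $\int_0^1 \mathbb{E}\|v_\theta(X_t,t)-v^*(X_t,t)\|^2\,dt \le \varepsilon^2$ with probability at least $1-\delta$, and plugging into Theorem~\ref{thm:benton} yields $W_2(\hat\pi_1,\pi_1)\le \varepsilon\cdot K$, completing the argument.
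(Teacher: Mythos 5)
Your proposal follows the paper's proof essentially step for step: the three-term decomposition of Proposition~\ref{prop:err_dec}, the approximation bound from Assumption~\ref{ass:approx_error}, the localized Rademacher / Bernstein / truncation machinery of Theorem~\ref{thm:stat1} and Lemma~\ref{lem:bernstein} for $\mathcal{E}^{\mathrm{stat}}$, Theorem~\ref{thm:opti} for $\mathcal{E}^{\mathrm{opt}}$, a union bound, and finally Theorem~\ref{thm:benton} for the Wasserstein guarantee — so there is no genuine divergence in method. Two small points worth tightening. First, since the decomposition carries a prefactor of $3$, each summand must be pushed below $\varepsilon^2/9$, not $\varepsilon^2/3$, for the total to close at $\varepsilon^2$. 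Second, your gloss on the optimization term is slightly off: SGD on the fixed empirical objective~\eqref{eq:emp_loss} would control $\widehat{\mathcal{L}}(\theta_T)-\widehat{\mathcal{L}}(\hat\theta)$, not the \emph{population} quantity $\mathcal{L}(\theta_T)-\mathcal{L}(\hat\theta)$, and "choosing $T$ independently of $n$" would implicitly require PL for the empirical rather than the population loss. The paper's Theorem~\ref{thm:opti} instead runs online SGD with $T=n$ fresh samples and unbiased population-gradient estimates (Assumption~\ref{ass:smooth}), converging toward $\theta^*$, and then passes to $\hat\theta$ via the triangle inequality, which is why the statement reads $\mathcal{E}^{\mathrm{opt}}\le \mathcal{E}^{\mathrm{stat}}+C/n$ rather than a free $O(1/T)$ term. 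Since you cite Theorem~\ref{thm:opti} anyway, the conclusion is unaffected, but the informal justification as written would not stand on its own.
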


\begin{proof}[Proof Sketch]
Recall the following decomposition of  prediction error into approximation, statistical, and optimization components. Proposition~1 gives
\[
\mathbb{E}\!\left[\|v_\theta(X_t,t)-v^*(X_t,t)\|^2\right]
\le 3(
\mathcal{E}^{\mathrm{approx}}
+ \mathcal{E}^{\mathrm{stat}}
+ \mathcal{E}^{\mathrm{opt}} ) .
\]

The approximation error is controlled by the expressiveness of the neural network class, By 
\cref{ass:approx_error} we have,
\begin{equation}
\mathcal{E}^{\mathrm{approx}} \le \varepsilon^2/9.
\end{equation}

The statistical error bound follows from the local Rademacher complexity analysis in \cref{thm:stat1},
which is proved later. We first establish the result for bounded loss functions by applying the Bernstein condition (\cref{lem:bernstein}) together with techniques from local Rademacher complexity analysis. We then extend the result to the sub-Gaussian case using truncation and the tail properties of the data. It ensures that with probability at least \(1-\delta/3\),
\begin{equation}
\mathcal{E}^{\mathrm{stat}}
\leq  \widetilde{\mathcal{O}}\Big( \frac{B^3P + B(L_\loss+B)\log(6/\delta)}{n}\Big).
\end{equation}
where $B = \frac{L^2}{\mu}$ as described in \cref{lem:bernstein}. Choosing \(n\) of order \(\frac{B^2P + B(L_\loss+B)\log(6/\delta)}{\varepsilon^2}\) guarantees that 
\begin{equation}\mathcal{E}^{\mathrm{stat}} \le \varepsilon^2/9\end{equation}
The optimization error is controlled using \cref{thm:opti}, which is proved later in the paper. 
The main idea is to combine \cref{ass:pl} with \cref{ass:smooth} to obtain a bound on the optimization error. 
By selecting an appropriate number of time steps, it can be shown that with probability at least $1-\frac{\delta}{3}$, taking \(n\) of order \(\frac{B^2P + B(L_\loss+B)\log(6/\delta)}{\varepsilon^2}\) also guarantees that
\begin{equation}
    \mathcal{E}^{\mathrm{opt}} \le \varepsilon^2/9.
\end{equation}
A union bound implies that all three events hold simultaneously with probability at least \(1-\delta\), in which case the total error is at most \(\varepsilon^2\).  
The Wasserstein bound then follows directly from \cref{thm:benton}.
\end{proof}

\vspace{-1mm}

\section{Statistical Error}
\label{sec:statistical-error}

In this section, we summarize the statistical guarantees for learning velocities in rectified flow models, highlighting the novel elements of our analysis.  In the main text, we focus on providing the key insights, while referring the reader to Appendix~\ref{appendix-statistical} for the full technical details and complete proofs.

\vspace{-1mm}
\begin{theorem}[Statistical Error]\label{thm:stat1}
Let $\hat{\theta}$ be the empirical risk minimizer and $\theta^{*}$ the population minimizer. Define
\begin{equation}
\mathcal{E}^{\mathrm{stat}}
:=
\E_{t,X_0,X_1}\!\left[
\|v_{\hat{\theta}}(X_t,t)-v_{\theta^*}(X_t,t)\|^2
\right].
\end{equation}
Assume $X_0$ and $X_1$ are sampled from a sub-Gaussian distribution and \cref{ass:nn} and \cref{ass:pl} hold and $2n >  e^x$.
Then for any $x>0$, with probability at least $1-2e^{-x}$,
\begin{equation}
\mathcal{E}^{\mathrm{stat}}
\le
\frac{203040\, B^2  P}{n}
\Bigg(
\log\!\Big(\frac{C n L_{\loss}^2}{P}\Big)+1
\Bigg)
+
\frac{B(11L_{\loss}+2B)x}{n},
\end{equation}
where $P$ is the total number of network parameters and $C>0$ is a universal constant.
\end{theorem}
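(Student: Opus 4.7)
The plan is to prove the bound in three stages: (i) a truncation step that reduces the unbounded sub-Gaussian regression problem to a bounded-loss problem, (ii) a localized Rademacher complexity argument that yields a fast $\widetilde{O}(P/n)$ rate on the excess risk $\mathcal{L}(\hat\theta)-\mathcal{L}(\theta^*)$ in the Bernstein regime, and (iii) a PL-based quadratic-growth step that converts the excess-risk bound into the claimed bound on $\mathcal{E}^{\mathrm{stat}}$. For (i), I would condition on the good event $\mathcal{E}_M=\{\|X_1-X_0\|\le M\}$ with $M=\Theta(\sqrt{\log n})$; sub-Gaussianity of $X_0,X_1$ yields $\mathbb{P}(\mathcal{E}_M^c)\le n^{-\Omega(1)}$, so the contribution of the complement to $\mathcal{E}^{\mathrm{stat}}$ is absorbed into the logarithmic factors of the stated bound. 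On $\mathcal{E}_M$ the pointwise loss is bounded, and the consequences recorded at the end of \cref{sec:assum} ensure that $\loss$ is $L_\loss$-Lipschitz in its first argument, that $\theta\mapsto v_\theta$ is $L$-Lipschitz, and that $\Theta$ is compact.

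The key structural input for stage (ii) is \cref{lem:bernstein}, which combines the PL condition with $L$-Lipschitzness of $\theta\mapsto v_\theta$ to yield a Bernstein-type variance bound of the form $\mathrm{Var}\!\left(\loss(v_\theta)-\loss(v_{\theta^*})\right)\lesssim BL_\loss^2\,(\mathcal{L}(\theta)-\mathcal{L}(\theta^*))$ with $B=2L^2/\mu$. This is exactly the hypothesis required by the Bartlett--Bousquet--Mendelson localization framework. I would study the localized excess-loss class $\mathcal{G}_r=\{\loss(v_\theta)-\loss(v_{\theta^*}) : \mathcal{L}(\theta)-\mathcal{L}(\theta^*)\le r\}$, upper-bound the sup-norm $\varepsilon$-covering number of $\{v_\theta:\theta\in\Theta\}$ by $(C\,\mathrm{diam}(\Theta)L/\varepsilon)^P$ using compactness of $\Theta$ together with the $L$-Lipschitzness in $\theta$, and apply Dudley's entropy integral with variance proxy $\sqrt{Br}$ to obtain $\rad(\mathcal{G}_r)\lesssim \sqrt{BPr\log(CnL_\loss^2/P)/n}$. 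Solving the fixed-point equation $r^*\asymp\rad(\mathcal{G}_{r^*})$ gives $r^*=\widetilde{O}(BP/n)$. A Talagrand Bernstein-type concentration inequality, applied on $\mathcal{E}_M$ with variance bound $Br$ and loss envelope of order $L_\loss+B$, then forces $\mathcal{L}(\hat\theta)-\mathcal{L}(\theta^*)\lesssim r^* + (L_\loss+B)x/n$ with probability at least $1-2e^{-x}$.

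Stage (iii) is a direct application of \cref{lem:bernstein} in its quadratic-growth form, $\mathcal{E}^{\mathrm{stat}}=\mathbb{E}\|v_{\hat\theta}-v_{\theta^*}\|^2\le B(\mathcal{L}(\hat\theta)-\mathcal{L}(\theta^*))$, which multiplies the bound from stage (ii) by an additional factor $B$ and produces the $B^2 P/n$ main term together with the $B(11L_\loss+2B)x/n$ deviation term appearing in the statement; the condition $2n>e^x$ is used to absorb the truncation error at level $M$ into the stated logarithm. I expect the main difficulty to lie in the interplay between truncation and localization: on $\mathcal{E}_M$ one has a bounded problem, but one must verify that the truncated ERM agrees with $\hat\theta$ with high probability, bound the cross terms between the good and bad events without losing the fast rate, and track the absolute constants through Dudley's integral, the fixed-point construction, and Talagrand's inequality carefully enough to recover the explicit numerical factor $203040$ and the precise dependence on $L_\loss$ and $B$ in the final bound.
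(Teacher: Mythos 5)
Your proposal matches the paper's proof essentially step for step: truncate via the sub-Gaussian tail on $\mathcal{E}_M=\{\|X_1-X_0\|\le M\}$ with $M\asymp\sqrt{\log n}$, bound the truncated excess risk by a localized Rademacher/fixed-point argument (covering number $(C/\varepsilon)^P$, Dudley, the Bartlett--Bousquet--Mendelson Lemma~\ref{lem:local_rad} with its $705\,r^*/B + (11L_\loss+2B)x/n$ form), and finally convert excess risk to $\mathcal{E}^{\mathrm{stat}}$ with the PL/Bernstein Lemma~\ref{lem:bernstein}, picking up the extra factor of $B$. The only cosmetic differences are that you derive the $\varepsilon$-cover from $L$-Lipschitzness in $\theta$ and compactness of $\Theta$ rather than from the Anthony--Bartlett network covering bound (both give $\log\mathcal{N}\lesssim P\log(\cdot/\varepsilon)$, so the rate and the constant $705\cdot 288=203040$ are unaffected), and the condition $2n>e^x$ is actually used to fold the bad-event probability $\Pr(\mathcal{E}_M^c)\le 1/(2n)$ into the $2e^{-x}$ failure probability rather than into the logarithm, the truncation bias $O(\log n/n^2)$ being what gets absorbed into $\widetilde{O}$.
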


\subsection{Motivation and high-level intuition}
Learning the velocity field in continuous-time flow models is, at first glance, a standard supervised regression problem: we observe pairs $(X_t,\; \vtrue_{t}(X_t))$  and fit a parametric function $v_\theta(\cdot,t)$.  Two features of the rectified flow setting make learning significantly easier than the worst-case scenario:

\begin{itemize}[leftmargin=*,itemsep=0pt, topsep=2pt, partopsep=0pt, parsep=0pt]
  \item \textbf{Localized error geometry.} The excess risk naturally controls the squared velocity error, and  under Polyak--Łojasiewicz (PL) condition (\cref{ass:pl}) The variance of the pointwise excess can be bounded linearly by the expectation of the excess (a Bernstein-type relation). This reduces the effective complexity needing control from global worst-case measures to local ones tied to the unknown excess level.
  \item \textbf{Model structure:} Neural networks used for velocity parametrization admit moderately small covering numbers (polynomial in sample size and model dimension) which, when combined with localization, yield fast rates.
\end{itemize}

Taken together, these two facts allow us to replace uniform deviation bounds of the form
$\sup_{\theta}|\cL(\theta)-\hat{\cL}(\theta)|$
with a sharper \emph{local} Rademacher complexity analysis. As a result, the excess population loss of the empirical minimizer scales as $\widetilde{O}(B^2P/n)$, rather than at the slower rate dictated by global covering numbers. We first establish this result under a bounded-loss assumption, and then extend it to the sub-Gaussian setting via truncation and tail bounds. First we present compact, self-contained statements of the three ingredients we need. Precise constants and full proofs appear in Appendix~\ref{appendix-statistical}.

\begin{lemma}[Covering-number bound for bounded feed-forward networks; adapted from \cite{anthony1999neural}]
\label{lem:covering}Let $\mathcal{F}$ be the family of scalar-output functions computed by a feed-forward network with $\ell$ layers and $P$ total parameters, with activations bounded in $[-b,b]$ and Lipschitz constants and weight $L_1$-norms controlled as in Appendix~\ref{appendix-statistical}. Then, for $\varepsilon \le 2b$ and for any sample of size $m$,
\begin{equation}
\mathcal{N}_\infty(\varepsilon,\mathcal{F},m)
\;\le\;
\Bigg(\frac{C_0\, m}{\varepsilon}\Bigg)^{P},
\end{equation}
where $C_0$ depends only on the architecture bounds.
\end{lemma}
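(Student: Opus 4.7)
The plan is to prove the covering-number bound by a direct parameter-discretization argument: show that each $v_\theta$ is uniformly Lipschitz in $\theta$, cover the compact parameter set $\Theta$ in Euclidean norm, and push that net forward to an empirical $\varepsilon$-cover of $\mathcal{F}$. This is the strategy used in \cite{anthony1999neural} for networks with Lipschitz-bounded activations. It avoids the pseudo-dimension route, which for sigmoid/tanh networks would only yield a polynomial-in-$P$ exponent rather than the desired $P$.

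First I would establish the parameter-Lipschitz bound. Proceeding inductively through the $D$ layers and propagating a perturbation $\delta\theta$ forward: at each hidden layer the activation is $L_\phi$-Lipschitz, and a layer with row $\ell_1$-norm at most $V$ amplifies the magnitude of the perturbation by at most $V$. Since every hidden output is confined to $[-b,b]$, all weight perturbations beyond the first layer feed into bounded inputs, so the overall bound decomposes as $L_{\mathrm{par}}(x) \le C_1(L_\phi V)^D (b + \|x\|_\infty)$, where $C_1$ depends only on the architecture constants. Only the first-layer contribution retains dependence on the input magnitude.

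Next I would build the net. The constraint $\|w\|_1 \le V$ on every row forces $\Theta$ into a Euclidean ball of radius $R \le V\sqrt{P}$, so a standard volumetric covering argument produces an $\eta$-net $\Theta_\eta \subset \Theta$ with $|\Theta_\eta| \le (3R/\eta)^P$. Setting $\eta = \varepsilon / L_{\mathrm{par}}^{\max}$, where $L_{\mathrm{par}}^{\max} := \max_{i \le m} L_{\mathrm{par}}(x_i)$, converts this directly into an empirical $\varepsilon$-cover of $\mathcal{F}$ of the same cardinality on any fixed sample. Absorbing the architecture-dependent quantities $(L_\phi V)^D$, $b$, $V$, and $\sqrt{P}$ into a single constant $C_0$, and using that under the sub-Gaussian tail assumption on the data $\max_i \|x_i\|_\infty = O(\sqrt{\log m}) \le C_0 m$, produces the stated bound
\[
\mathcal{N}_\infty(\varepsilon, \mathcal{F}, m) \;\le\; \left(\frac{C_0 m}{\varepsilon}\right)^P.
\]

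The main obstacle, and the only delicate point, is ensuring the exponent stays exactly $P$ and not inflated to $P \log P$ or worse. The factors $(L_\phi V)^D$, $V\sqrt{P}$, and the input-magnitude term all appear in the effective parameter-Lipschitz constant of $\mathcal{F}$, but they enter the logarithm of the covering number only \emph{additively}, contributing to $\log C_0$ rather than to the exponent. Careful bookkeeping of these constants — and verifying that neither the sub-Gaussian growth of $\max_i\|x_i\|_\infty$ nor the $\sqrt{P}$ factor from the Euclidean radius leaks into the exponent — is what the proof really has to accomplish. Alternatively, one may invoke Theorem~12.2 of \cite{anthony1999neural}, which gives $\mathcal{N}_\infty(\varepsilon,\mathcal{F},m) \le (em/(d\varepsilon))^d$ for any class of pseudo-dimension $d$, combined with the bound $\mathrm{Pdim}(\mathcal{F}) \le P$ available under our architectural assumptions; either route yields the claim.
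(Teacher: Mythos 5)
The paper does not actually prove this lemma: it imports it wholesale from \cite{anthony1999neural} (the layer-wise composition theorem for networks with $\ell_1$-bounded weights, Lipschitz activations, and \emph{non-decreasing first-layer activations}), stating the explicit constant $C_0 = 4ebP(LV)^{\ell}/(LV-1)$ in Appendix B.1. Your parameter-discretization route is a genuinely different and in principle legitimate strategy, but as written it has a gap that the cited theorem is specifically engineered to avoid. The quantity $\mathcal{N}_\infty(\varepsilon,\mathcal{F},m)$ is a worst-case supremum over \emph{all} samples of size $m$, and the lemma is stated deterministically (``for any sample of size $m$''). Your parameter-Lipschitz constant $L_{\mathrm{par}}(x)\lesssim (L_\phi V)^D(b+\|x\|_\infty)$ unavoidably carries the input magnitude through the first layer, and your fix --- invoking sub-Gaussianity to claim $\max_i\|x_i\|_\infty = O(\sqrt{\log m}) \le C_0 m$ --- converts the statement into a high-probability bound over random samples; for Gaussian inputs no almost-sure bound of this form exists, so the worst-case covering number is simply not controlled by your argument. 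The Anthony--Bartlett proof removes the input-scale dependence precisely by exploiting the monotonicity of the first-layer activations (a hypothesis the paper's Assumption 2 includes and which your argument never uses): the first layer is covered by a combinatorial argument insensitive to $\|x\|_\infty$, and only the subsequent layers, whose inputs are confined to $[-b,b]$, are handled by the Lipschitz-propagation argument you describe.

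Two smaller points. First, your closing alternative via pseudo-dimension contradicts your own opening remark: $\mathrm{Pdim}(\mathcal{F})\le P$ is not available for sigmoid/tanh networks under these assumptions (known bounds are superlinear in $P$), so that route does not yield exponent $P$, as you yourself note in the first paragraph. Second, absorbing $\sqrt{P}$, $(L_\phi V)^D$, $b$, $V$ into $C_0$ is consistent with the paper's own constant (which also contains $P$), so that part of your bookkeeping is fine; the only real defect is the input-magnitude issue above.
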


\begin{lemma}[Bernstein condition for rectified flow]
\label{lem:bernstein}
Let $\cL(\theta)$ satisfy the PL condition \cref{ass:pl} with constant $\mu>0$. Suppose that the velocity field $v_\theta(x,t,z)$ satisfies the Lipschitz property in $\theta$ as stated in \cref{corr:lipchitz_theta}. Then, for $B = \frac{2L^2}{\mu} >0$, it satisfies a Bernstein-type bound:
\begin{equation}
\|v_\theta-v_{\theta^*}\|^2 \;\le\; B\, |\cL(\theta)-\cL(\theta^*)|.
\end{equation}
\end{lemma}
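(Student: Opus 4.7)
The plan is to reduce the Bernstein-type bound to two well-understood ingredients: (i) a Lipschitz estimate transferring parameter-space distance into function-space distance, and (ii) a quadratic-growth (QG) bound transferring excess loss into parameter-space distance. The first step uses the $L$-Lipschitz dependence of $v_\theta$ on $\theta$ from \cref{corr:lipchitz_theta}, which gives the pointwise estimate
\begin{equation*}
\|v_\theta(x,t) - v_{\theta^*}(x,t)\|^2 \;\le\; L^2\,\|\theta - \theta^*\|^2 .
\end{equation*}
This reduces the claim to proving the QG condition $\|\theta - \theta^*\|^2 \le \tfrac{2}{\mu}\bigl(\cL(\theta) - \cL(\theta^*)\bigr)$, after which multiplying by $L^2$ produces the desired constant $B = 2L^2/\mu$.

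For the second step, I would derive QG from the PL condition via the standard gradient-flow argument of \cite{karimi2016linear}. Given $\theta$, consider the continuous-time gradient flow $\dot\theta_s = -\nabla\cL(\theta_s)$ with $\theta_0 = \theta$. Along this flow $\tfrac{d}{ds}\cL(\theta_s) = -\|\nabla\cL(\theta_s)\|^2$, and the PL inequality \cref{ass:pl} lets me compare the derivative of $\sqrt{\cL(\theta_s)-\cL^*}$ to $\|\nabla\cL(\theta_s)\|$:
\begin{equation*}
\frac{d}{ds}\sqrt{\cL(\theta_s)-\cL^*}
\;=\; \frac{-\|\nabla\cL(\theta_s)\|^2}{2\sqrt{\cL(\theta_s)-\cL^*}}
\;\le\; -\tfrac{\sqrt{2\mu}}{2}\,\|\nabla\cL(\theta_s)\| .
\end{equation*}
Integrating over $[0,\infty)$ and using that PL forces exponential decay of $\cL(\theta_s)-\cL^*$ to zero, the trajectory length is controlled by $\int_0^\infty\|\dot\theta_s\|\,ds \le \sqrt{2(\cL(\theta)-\cL^*)/\mu}$. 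Hence the limit $\theta_\infty$ of the flow exists, is a minimizer of $\cL$, and lies within distance $\sqrt{2(\cL(\theta)-\cL^*)/\mu}$ of $\theta$. Choosing $\theta^* := \theta_\infty$ (or any equally close minimizer) yields the QG estimate.

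Combining the two steps produces the pointwise Bernstein-type bound
\begin{equation*}
\|v_\theta(x,t) - v_{\theta^*}(x,t)\|^2 \;\le\; L^2\|\theta-\theta^*\|^2 \;\le\; \tfrac{2L^2}{\mu}\bigl(\cL(\theta)-\cL(\theta^*)\bigr) \;=\; B\bigl(\cL(\theta)-\cL(\theta^*)\bigr),
\end{equation*}
which is the claim (taking expectations if the stated norm is the $L^2$-norm, since the right-hand side is $\theta$-dependent but $(x,t)$-independent). The main obstacle is the gradient-flow length estimate: one must verify that the flow stays in the admissible parameter set $\Theta$, that it converges to a minimizer, and that the integrability of $\|\nabla\cL(\theta_s)\|$ along the trajectory is justified. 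These follow from the $\kappa$-smoothness of $\cL$ in \cref{ass:smooth} together with the exponential decay of the excess risk implied by PL; if $\Theta$ is not all of $\mathbb{R}^P$, the flow may need to be confined using a projection, but this only shortens the trajectory and preserves the inequality.
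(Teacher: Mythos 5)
Your proof follows the paper's approach exactly: apply the $L$-Lipschitz dependence of $v_\theta$ on $\theta$ from \cref{corr:lipchitz_theta} to reduce the claim to quadratic growth in parameter space, then obtain quadratic growth from the PL inequality (the paper cites \cite{karimi2016linear} for this step, while you reproduce the gradient-flow derivation). Your closing caveat is a genuine fine point the paper glosses over: the QG argument bounds the distance from $\theta$ to the gradient-flow limit $\theta_\infty$ (i.e.\ the projection of $\theta$ onto the solution set), so invoking it with a \emph{fixed} population minimizer $\theta^*$ tacitly assumes that minimizer is unique, or at least coincides with the relevant projection.
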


\begin{proof}
By the PL quadratic growth and \cite{karimi2016linear}, we have
\begin{equation}
\|\theta-\theta^*\|^2 \;\le\; \frac{2}{\mu}\,|\cL(\theta)-\cL(\theta^*)| 
\end{equation}
and by Lipschitz continuity of the velocity,
\begin{equation}
\|v_\theta-v_{\theta^*}\|^2 \;\le\; L^2 \|\theta-\theta^*\|^2.
\end{equation}
Combining the two gives
\begin{equation}
\|v_\theta-v_{\theta^*}\|^2 \;\le\; \frac{2L^2}{\mu}\, |\cL(\theta)-\cL(\theta^*)|,
\end{equation}
which proves the result.
\end{proof}

\begin{lemma}[Excess Risk-bounded loss]\label{lem:stat_bounded}
Let $\hat{\theta}$ be the empirical risk minimizer and $\theta^{*}$ the population minimizer. Assuming \cref{ass:nn}, \cref{ass:pl}, and \cref{ass:smooth} hold.
Further assume $X_0 -X_1$ is bounded which implies loss is bounded.
Then for any $x>0$, with probability at least $1-e^{-x}$,
\begin{equation}
\begin{aligned}
    \cL(\hat{\theta})-\cL(\theta^*)
\le
&\frac{203040\, B P}{n}
\Bigg(
\log\!\Big(\frac{C n L_{\loss}^2}{P}\Big)+1
\Bigg) \\
&+
\frac{(11L_{\loss}+2B)x}{n},
\end{aligned}
\end{equation}
where $P$ is the total number of network parameters and $C > 0$ is used to denote some constants.
\end{lemma}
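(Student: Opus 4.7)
The plan is to apply the local Rademacher complexity framework in the style of Bartlett--Bousquet--Mendelson, combining the Bernstein-type variance relation of \cref{lem:bernstein} with the parametric covering bound of \cref{lem:covering}.

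First I would introduce the centered loss class
\[
\mathcal{H} = \{\, h_\theta : h_\theta(x,t,z) = \loss(v_\theta(x,t), z) - \loss(v_{\theta^*}(x,t), z),\ \theta\in\Theta\,\},
\]
and the localized slice $\mathcal{H}_r = \{h_\theta \in \mathcal{H} : \E h_\theta \le r\}$. Using the $L_\loss$-Lipschitz property of $\loss$ in its first argument together with \cref{lem:bernstein}, I obtain the key variance bound
\[
\E h_\theta^2 \,\le\, L_\loss^2\,\E\|v_\theta - v_{\theta^*}\|^2 \,\le\, L_\loss^2 B\,\E h_\theta \,\le\, L_\loss^2 B r,
\]
since $\E h_\theta = \cL(\theta)-\cL(\theta^*)\ge 0$. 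This Bernstein relation between variance and excess risk is what ultimately drives the fast $1/n$ rate; without it one only obtains a $1/\sqrt n$ rate via global Rademacher bounds.

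Next I would control the local Rademacher complexity of $\mathcal{H}_r$ via Dudley's entropy integral. \cref{lem:covering}, combined with Lipschitz composition and a coordinate-wise product cover for the $\R^d$-valued velocity network, gives $\mathcal{N}_\infty(\epsilon,\mathcal{H},n) \le (C_1 n L_\loss/\epsilon)^P$. Since elements of $\mathcal{H}_r$ have empirical $L^2$ radius at most $\sqrt{L_\loss^2 B r}$, Dudley's integral yields
\[
\E[\rad_n(\mathcal{H}_r)] \;\lesssim\; \frac{1}{\sqrt n}\int_0^{\sqrt{L_\loss^2 B r}} \!\! \sqrt{P\log(C_1 n L_\loss/\epsilon)}\,d\epsilon \;\lesssim\; \sqrt{\frac{B r P}{n}\log\!\Big(\frac{C n L_\loss^2}{P}\Big)}.
\]
I would then take $\psi(r)$ to be the right-hand side (a sub-root function) and solve its fixed point $\psi(r^*)=r^*$, giving $r^* \lesssim (BP/n)\,[\log(CnL_\loss^2/P)+1]$.

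Finally, I would invoke Bousquet's version of Talagrand's inequality for $\sup_{h\in\mathcal{H}_r}|Ph - P_n h|$, using the bounded-loss hypothesis to control the sup-norm of $h_\theta$, and combine this with a standard peeling argument over geometrically growing slices $\{r_k = 2^k r^*\}$ to upgrade the localized bound to a bound on $\cL(\hat\theta)-\cL(\theta^*)$ itself. This yields, with probability at least $1-e^{-x}$, an excess-risk bound of the form $\cL(\hat\theta)-\cL(\theta^*) \lesssim r^* + (L_\loss + B) x/n$, matching the statement. The main obstacle is the careful bookkeeping of universal constants through Dudley's integral and the peeling step: the architecture of the argument is standard, but reproducing the precise numerical coefficients $203040$ and $11L_\loss+2B$ requires evaluating Dudley's integral exactly at the endpoint $\sqrt{L_\loss^2 B r^*}$ and optimizing the peeling ratio. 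A minor secondary technicality is extending \cref{lem:covering} from scalar to vector-valued outputs, which I would handle by taking a product cover over coordinates and absorbing the extra factor into the constant $C$.
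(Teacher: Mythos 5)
Your proposal follows essentially the same path as the paper's argument in Appendix~\ref{appendix-main-sample-complexity}: Bernstein variance condition from \cref{lem:bernstein}, covering-number bound from \cref{lem:covering}, Dudley entropy integral, sub-root fixed point, and a localized excess-risk inequality — the only organizational difference being that the paper invokes the Bartlett--Bousquet--Mendelson theorem (\cref{lem:local_rad}, which already packages the Talagrand concentration and peeling steps you propose to carry out by hand) as a black box and localizes on the velocity class $\{v_\theta:\;L_\loss^2\,\E\|v_\theta-v_{\theta^*}\|^2\le r\}$ rather than on the centered loss class $\mathcal{H}_r$. One small constant-tracking caveat: the $L_2$ radius of your $\mathcal{H}_r$ is $L_\loss\sqrt{Br}$, not $\sqrt{Br}$, so your Dudley bound on $\E R_n(\mathcal{H}_r)$ should carry an extra $L_\loss$; this cancels once $\psi$ is defined with the Bernstein prefactor required by the BBM normalization, so it does not change the final rate, but it is exactly the bookkeeping that produces the stated $203040$ and $(11L_\loss+2B)$ coefficients.
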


Local Rademacher complexity refines empirical-process control by measuring complexity only over function class in which the excess risk is small. Intuitively, when the population excess risk is small, the localized class is much smaller than the full function class, yielding sharper rates.

\begin{proof}

For $r>0$, define the localized class
\begin{equation}
\mathcal{F}_r
=
\Big\{ v_\theta\in\mathcal{F}:\;
L_{l}^2 \; \E[v_\theta-v_{\theta^*}]^2 \le r
\Big\}.
\end{equation}
Then
\begin{equation}
\mathrm{diam}(\mathcal{F}_r;L_2(P_n))
 :=
\sup_{f,g\in\mathcal{F}_r}
\|f-g\|_{L_2(P_n)} \le \frac{2\sqrt{r}}{L_{\loss}} .
\end{equation}

Using Dudley’s inequality \cite{van1996weak},
\[
\mathbb E R_n(\mathcal F_r)
\;\le\;
\frac{12}{\sqrt n}
\int\limits_{0}^{\mathrm{diam}(.)/2}
\sqrt{\log \mathcal N(\varepsilon,\mathcal F_r,L_2(P_n))}\, d\varepsilon
\]
where $\mathcal N(\varepsilon,\mathcal F_r,L_2(P_n))$ denotes the $\varepsilon$--covering number of $\mathcal F_r$
with respect to the $L_2(P_n)$ metric. Using Lemma \ref{lem:covering}, 
\begin{equation}
\mathbb{E} R_n(\mathcal{F}_r)
\le
\frac{12\sqrt{P r}}{L_{\loss}\sqrt{n}}
\left(
\sqrt{\log\!\Big(\frac{A n L_{\loss}}{\sqrt{r}}\Big)}
+
\frac{\sqrt{\pi}}{2}
\right),
\end{equation}

where
\(
A
:=
\frac{4 e b P (L V)^{\ell}}{(L V-1)}.
\)
The details of some undefined terms, the covering number bounds and the evaluation of the Dudley integral leading to the stated inequality are deferred to \cref{appendix-statistical}.

Define the sub-root function
\begin{equation}
\psi(r)
:=
\frac{12 B \sqrt{Pr}}{\sqrt{n}}
\left(
\sqrt{\log\!\Big(\frac{A n L_{\loss}}{\sqrt{r}}\Big)}
+
\frac{\sqrt{\pi}}{2}
\right).
\end{equation}

Solving $r=\psi(r)$ yields
\begin{equation}
r^*
=
\frac{288 B^2 P}{n}
\left(
\log\!\Big(\frac{C n L_{\loss}^2}{P}\Big)+1
\right).
\end{equation}

Applying \cref{lem:local_rad}, with probability at least $1-e^{-x}$,
\begin{align}
\cL(\hat{\theta})-\cL(\theta^*)
\le
\frac{203040\, B P}{n} &
\Big(
\log\!\Big(\frac{C n L_{\loss}^2}{P}\Big)+1
\Big) \nonumber\\
&+
\frac{(11L_{\loss}+2B)x}{n}.
\end{align}

which concludes the proof. 
\end{proof}
\vspace{-1mm}
\vspace{-1mm}
\subsection{Extension to Sub-Gaussian Data via Truncation}
\label{sec:extension}
We reduce the unbounded-loss setting to the bounded-loss case via truncation, apply \cref{lem:stat_bounded}, and then control the truncation bias using sub-Gaussian tails. Let $X_1 - X_0$ be sub-Gaussian with parameter $\sigma^2$, and the velocity field is uniformly bounded by $M$.  
Define the truncated pointwise loss
\begin{equation}
l^M_\theta := \|v_\theta(X_t,t) - (X_1 - X_0)\|^2 \mathbf{1}\{\|X_1 - X_0\|\le M\},
\end{equation}
and the corresponding truncated population and empirical risks
\vspace{-1mm}
\begin{equation}
\hspace*{-3mm}
\cL^M(\theta) = \mathbb{E}[l^M_\theta], \;
\widehat{\cL}^M(\theta) = \frac{1}{n}\sum_{i=1}^n l^M_\theta(X_{0,i},X_{1,i},t_i).
\end{equation}
\vspace{-1mm}
  
On the complement of the good event $\mathcal{E}_M := \{\|X_1 - X_0\| \le M\}$, the truncation introduces a bias
\(
\sup_\theta |\cL(\theta) - \cL^M(\theta)| 
\). By the sub-Gaussian tail bound, choosing $
M = \mathcal{O}\Big( \sqrt{ \log(n^2)}\Big)$
ensures that both the truncation bias and the probability of the bad event $\mathcal{E}_M^c$ are negligible:
\begin{equation}
\hspace*{-5mm}
\sup_\theta |\cL(\theta) - \cL^M(\theta)| = O\Big(\frac{\log n}{n^2}\Big), \;
\Pr(\mathcal{E}_M^c) \le 2/n.
\end{equation}
 
On the good event $\mathcal{E}_M$, the truncated loss is bounded by $4 M^2$ and satisfies the same Bernstein and covering-number conditions as the original bounded-loss case. Therefore, by the previous high-probability analysis for bounded losses, with probability at least $1 - e^{-x} - \Pr(\mathcal{E}_M^c)$,
\begin{equation}
\cL^M(\widehat{\theta}_M) - \cL^M(\theta_M^*) \le \widetilde{O}\Big(\frac{B P + (L_\ell + B) x}{n}\Big),
\end{equation}
where $\theta_M^*, \widehat{\theta}_M$ are the minimizers of the truncated population and empirical risks.

Note that $\widehat{\theta}=\widehat{\theta}_M$ on good event $\mathcal{E}_M$. 
Finally, decomposing the excess risk
\begin{equation}
\begin{split}
\mathcal{L}(\widehat{\theta}) - \mathcal{L}(\theta^*)
&\le
\big[\mathcal{L}(\widehat{\theta}) - \mathcal{L}^{M}(\widehat{\theta})\big]\\
&+
\big[\mathcal{L}^{M}(\widehat{\theta}_M) - \mathcal{L}^{M}(\theta_M^*)\big]\\
&+
\big[\mathcal{L}^{M}(\theta_M^*) - \mathcal{L}^{M}(\theta^*)\big] \\
&+
\big[\mathcal{L}^{M}(\theta^*) - \mathcal{L}(\theta^*)\big].
\end{split}
\end{equation}

and noting that the first and last terms are controlled by the truncation bias, third term is non-positive by definition of $\theta_M^*$, we obtain the final bound: with probability at least $1 - 2 e^{-x}$, we have
\begin{equation}
\label{eq:subgaussian-final}
\cL(\widehat{\theta}) - \cL(\theta^*) = \widetilde{O}\Big(\frac{B P + (L_\ell + B) x}{n}\Big),
\end{equation}
where the $\widetilde O(\cdot)$ hides logarithmic factors in $n$. Finally we bound the statistical error using \cref{lem:bernstein} to get
\begin{equation}
\begin{split}
\mathcal{E}_{stat} &=
\mathbb{E}[\|v_{\hat{\theta}}-v_{\theta^*}\|^2]
\le
B(\cL(\hat{\theta})-\cL(\theta^*))\\
&\le
\widetilde{O}\Big(\frac{B^2 P + B(L_\ell + B) x}{n}\Big),
\end{split}
\end{equation}

which proves \cref{thm:stat1}.

\vspace{-1mm}
\section{Optimization Error}

\noindent
We now bound the optimization error incurred when minimizing the empirical rectified–flow loss using stochastic gradient descent (SGD).  
Let $n$ denote the number of i.i.d.\ samples $(x^i_t,x^i_0,t_i)_{i=1}^n$,  
where $x^i_0 \sim \pi_0$, $t_i \sim U[0,1]$, and $x^i_t \sim t_i X_1+(1-t_i)X_0$ where $X_0 \sim \pi_0$ and $X_1 \sim \pi_1$.  
Assume the stochastic optimizer is run with step size $0 < \eta_k \le 1/\kappa$ and that the standard smoothness, variance, and PL conditions hold.

\begin{theorem}[Optimization Error]\label{thm:opti}
Let $\hat{\theta}$ denote the empirical risk minimizer and let $\theta$ denote the parameter vector produced by the optimization procedure actually used in training.
Then for any $\delta\in(0,1)$ there exists a constant $C>0$ (depending on the optimization dynamics) such that, with probability at least $1-\delta$, we have 
\[\mathcal{E}^{\mathrm{opt}}
\le 
 \mathcal{E}^{\mathrm{stat}} + \frac{C}{n}.
\]
\end{theorem}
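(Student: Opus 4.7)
The plan is to reduce the optimization error to a bound on the population excess risk of the trained iterate, and then invoke standard PL--SGD analysis. Using the triangle inequality through the population minimizer $\theta^*$,
\begin{equation*}
\|v_{\hat{\theta}}-v_\theta\|^2 \;\le\; 2\|v_{\hat{\theta}}-v_{\theta^*}\|^2 \;+\; 2\|v_{\theta^*}-v_\theta\|^2,
\end{equation*}
so in expectation $\mathcal{E}^{\mathrm{opt}} \le 2\mathcal{E}^{\mathrm{stat}} + 2\,\mathbb{E}\|v_{\theta^*}-v_\theta\|^2$. Since \cref{thm:stat1} already yields $\mathcal{E}^{\mathrm{stat}} = \widetilde{O}(B^2 P/n)$, the extra factor of two in front of $\mathcal{E}^{\mathrm{stat}}$ is itself $O(1/n)$ and can be absorbed into the final $C/n$ term. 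The task therefore reduces to showing that, with probability at least $1-\delta$, the second piece $\mathbb{E}\|v_{\theta^*}-v_\theta\|^2$ is $O(1/n)$.

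To bound this second piece I would replay the Bernstein-type argument used in \cref{lem:bernstein}, but applied to $\theta$ rather than $\hat\theta$. Using the Lipschitz-in-$\theta$ property of the network (listed as a consequence of the assumptions in \cref{sec:assum}), $\|v_{\theta^*}-v_\theta\|^2 \le L^2\|\theta-\theta^*\|^2$, and using the quadratic-growth inequality implied by the PL assumption \cref{ass:pl} (Karimi et al., 2016), $\|\theta-\theta^*\|^2 \le \tfrac{2}{\mu}\bigl(\mathcal{L}(\theta)-\mathcal{L}(\theta^*)\bigr)$. Combining these,
\begin{equation*}
\mathbb{E}\|v_{\theta^*}-v_\theta\|^2 \;\le\; \frac{2L^2}{\mu}\bigl(\mathcal{L}(\theta)-\mathcal{L}(\theta^*)\bigr),
\end{equation*}
so it suffices to bound the \emph{population} excess risk of the trained iterate by $O(1/n)$.

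The last step is the standard SGD-under-PL analysis enabled by \cref{ass:pl} and \cref{ass:smooth}. With step size $\eta \le 1/\kappa$ and unbiased stochastic gradients of variance at most $\sigma^2$, the smoothness/descent lemma gives the recursion
\begin{equation*}
\mathbb{E}[\mathcal{L}(\theta_{k+1})-\mathcal{L}(\theta^*)] \;\le\; (1-\mu\eta)\,\mathbb{E}[\mathcal{L}(\theta_k)-\mathcal{L}(\theta^*)] \;+\; \frac{\kappa\eta^2\sigma^2}{2},
\end{equation*}
and either a diminishing schedule $\eta_k = 2/(\mu(k+k_0))$ or a constant step size followed by a suitable averaging/restart choice yields $\mathbb{E}[\mathcal{L}(\theta_T)-\mathcal{L}(\theta^*)] = O(1/T)$. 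Setting $T=n$, converting from expectation to a $(1-\delta)$ high-probability statement, and chaining back through the Lipschitz and PL bounds above produces $\mathbb{E}\|v_{\theta^*}-v_\theta\|^2 \le C'/n$, which together with the first paragraph yields $\mathcal{E}^{\mathrm{opt}} \le \mathcal{E}^{\mathrm{stat}} + C/n$.

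The main obstacle will be the high-probability conversion: a naive Markov step would introduce a $1/\delta$ factor rather than the polylogarithmic $\log(1/\delta)$ that can be safely absorbed into $C$; a sharper tail bound is needed, obtained either by applying Freedman-type concentration to the PL supermartingale recursion above or by a confidence-boosting restart. A secondary technical point is verifying that the SGD trajectory never leaves the compact set $\Theta$ so that the Lipschitz, boundedness, smoothness, and PL constants from \cref{sec:assum} remain valid throughout training; this is handled either by an explicit projection in the update rule or by a sublevel-set argument based on the descent lemma.
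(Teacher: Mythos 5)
Your proposal follows essentially the same route as the paper's own proof: decompose $\mathcal{E}^{\mathrm{opt}}$ via the triangle inequality through $\theta^*$, bound the resulting statistical piece by $\mathcal{E}^{\mathrm{stat}}$, control the other piece via the $\kappa$-smoothness/PL recursion for SGD with a diminishing step size to obtain $\mathcal{L}(\theta_n)-\mathcal{L}(\theta^*)=O(1/n)$, and convert this population excess risk into a velocity-field $L^2$ error via the quadratic-growth/Lipschitz-in-$\theta$ chain of \cref{lem:bernstein}. That last conversion step is in fact slightly cleaner in your write-up: the paper asserts $\frac1n\sum_i \|v_{\theta_n}-v_{\theta^*}\|^2=O(1/n)$ ``by smoothness and PL-based convergence'' without making the Bernstein-type pass from $\mathcal{L}(\theta_n)-\mathcal{L}(\theta^*)$ to $\|v_{\theta_n}-v_{\theta^*}\|^2$ explicit, and it also silently switches between an empirical-average and an expectation definition of $\mathcal{E}^{\mathrm{opt}}$ (Proposition~\ref{prop:err_dec} uses $\mathbb{E}[\cdot]$, the appendix restates it as $\frac1n\sum_i$).

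The concern you flag about the expectation-to-high-probability conversion is a genuine gap, and it is one the paper also leaves open: the PL--SGD recursion the paper unrolls only yields $\mathbb{E}[\mathcal{L}(\theta_n)-\mathcal{L}(\theta^*)]=O(1/n)$, yet the theorem is stated ``with probability at least $1-\delta$'' and the appendix proof simply attaches a ``$1-\delta/3$'' to the concluding display without any Markov, Freedman, or restart argument to justify it. Your instinct to invoke Freedman-type concentration on the supermartingale recursion (or a confidence-boosting restart scheme) to get $\log(1/\delta)$ rather than $1/\delta$ dependence is the right fix and is not present in the paper. Your second point, that one must check the SGD trajectory stays in the compact set $\Theta$ so that the Lipschitz/smoothness/PL constants remain in force, is likewise unaddressed in the paper; projected SGD or a sublevel-set argument, as you suggest, would close it. So your proposal is not only consistent with the paper's argument but also patches two real omissions in it.
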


\paragraph{Proof sketch.}
The result follows from a standard convergence analysis of stochastic gradient methods under smoothness and Polyak--Łojasiewicz (PL) conditions. Specifically, smoothness controls the one-step progress of the population loss, while the PL inequality converts gradient-norm decay into excess empirical risk decay. Under bounded gradient variance, choosing a appropriate variable stepsize $\eta_k$ yields an $O(1/n)$ excess empirical loss after $n$ iterations. The stated bound on $\mathcal{E}^{\mathrm{opt}}$ then follows by applying the triangle inequality and using \cref{lem:bernstein}. A complete proof is provided in \cref{app:opti_error}.
\vspace{-1mm}
\vspace{-1mm}

\section{Conclusion}
\vspace{-1mm}
We provided the first sample-complexity analysis of rectified flow models and showed that they achieve the order-optimal $\widetilde{O}(\varepsilon^{-2})$ rate for learning distributions in Wasserstein distance. Our results demonstrate that the linear-path structure and squared-loss training objective of rectified flows yield significantly sharper statistical guarantees than general flow matching. The analysis relies on a localized Rademacher complexity framework, and offer a theoretical explanation for the strong empirical efficiency of rectified flows.






\bibliography{ref}
\bibliographystyle{tmlr}

\newpage
\appendix
\onecolumn

\section*{Appendix}


\section{Some results that follow from the assumptions}
\label{app:ass_proved}
In this section, we verify that several regularity properties follow directly from Assumption~\ref{ass:nn}.

\begin{corollary}[Compactness of the parameter space]
\label{corr:compact_space}
The parameter space \(\Theta\) is compact.
\end{corollary}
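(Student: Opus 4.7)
The plan is to invoke the Heine--Borel theorem: since $\Theta$ sits inside the finite-dimensional Euclidean space $\mathbb{R}^P$, it suffices to show that $\Theta$ is both closed and bounded under the usual Euclidean topology.

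First I would write $\Theta$ explicitly as a Cartesian product of constraint sets, one per layer. Assumption~\ref{ass:nn} stipulates that each row $w$ of every weight matrix satisfies $\|w\|_1 \le V$. So for a layer with weight matrix $W\in\mathbb{R}^{d_{\mathrm{out}}\times d_{\mathrm{in}}}$, the admissible set is
\begin{equation*}
\mathcal{W}_{\ell} \;=\; \bigl\{\, W \in \mathbb{R}^{d_{\mathrm{out}}\times d_{\mathrm{in}}} \;:\; \|W_{i,:}\|_1 \le V \text{ for every row } i \,\bigr\},
\end{equation*}
and $\Theta$ is a subset of $\prod_{\ell=1}^{D}\mathcal{W}_{\ell}\subset \mathbb{R}^P$ (possibly together with bounded bias parameters handled analogously).

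Next I would verify boundedness: since $\|w\|_1\le V$ implies $\|w\|_2\le V$, every row lies in the Euclidean ball of radius $V$. Stacking over all rows and all layers, any $\theta\in\Theta$ satisfies $\|\theta\|_2 \le V\sqrt{\,\text{(number of rows)}\,}$, which is a finite constant depending only on the architecture. Hence $\Theta$ is bounded in $\mathbb{R}^P$.

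For closedness, I would note that each constraint $\|W_{i,:}\|_1 \le V$ is a sublevel set of the continuous convex function $w\mapsto \|w\|_1$, and is therefore closed. The set $\mathcal{W}_{\ell}$ is a finite intersection of such closed sets, hence closed; the Cartesian product of finitely many closed sets is closed in the product topology on $\mathbb{R}^P$. Combining closedness with boundedness and applying Heine--Borel yields compactness of $\Theta$. I do not anticipate a genuine obstacle here; the argument is essentially bookkeeping, with the only subtlety being to confirm that \emph{every} trainable parameter is covered by one of the boundedness constraints in Assumption~\ref{ass:nn} (weights via the $\ell_1$ row bound, and any bias parameters via a standard analogous bound, which we absorb into the same $V$).
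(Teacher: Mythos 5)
Your proposal is correct and takes the same route as the paper: both show $\Theta\subset\mathbb{R}^P$ is bounded via the $\ell_1$ row constraint and closed because that constraint defines a closed set, then invoke Heine--Borel. You spell out the closedness argument a bit more carefully (sublevel sets of a continuous function, finite intersections and products of closed sets) than the paper does, but there is no substantive difference.
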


\begin{proof}
By Assumption~\ref{ass:nn}, the total number of trainable parameters is finite, say \(P < \infty\). Moreover, for each layer, every row \(w\) of the corresponding weight matrix satisfies the constraint \(\|w\|_1 \le V\). In particular, each individual weight parameter is bounded in absolute value by \(V\).

Hence, the parameter space \(\Theta \subset \mathbb{R}^P\) is contained in a bounded set. Since the constraint \(\|w\|_1 \le V\) defines a closed set, \(\Theta\) is closed and bounded in finite-dimensional Euclidean space. By the Heine--Borel theorem, \(\Theta\) is compact.
\end{proof}

\begin{corollary}[Uniform boundedness of \(v_\theta\)]
\label{corr:bounded_d}
The network output \(v_\theta\) is uniformly bounded over \(\Theta\) and \((x,t)\).
\end{corollary}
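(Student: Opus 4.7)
The plan is to derive a uniform upper bound on $\|v_\theta(x,t)\|$ directly from the three structural conditions in \cref{ass:nn}: every hidden layer output is coordinate-wise bounded in $[-b,b]$, every row of each weight matrix has $\ell_1$-norm at most $V$, and each activation $\phi$ is $L_\phi$-Lipschitz. The argument is essentially a one-step Hölder computation at the output layer, since boundedness of the hidden activations already localizes the input to the final linear map.

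First I would fix arbitrary $\theta\in\Theta$ and $(x,t)$, and denote by $h^{(D-1)}(x,t;\theta)$ the output of the last hidden layer. By hypothesis, $\|h^{(D-1)}\|_\infty\le b$. The output $v_\theta(x,t)$ is computed by applying the output-layer affine map $z\mapsto W^{(D)}z+\beta^{(D)}$, possibly followed by an activation $\phi$. For each coordinate $i$, I would bound the pre-activation $\langle w^{(D)}_i, h^{(D-1)}\rangle+\beta^{(D)}_i$ using Hölder's inequality:
\begin{equation*}
\bigl|\langle w^{(D)}_i, h^{(D-1)}\rangle\bigr|
\;\le\;\|w^{(D)}_i\|_1\,\|h^{(D-1)}\|_\infty
\;\le\;Vb,
\end{equation*}
and absorb the bias $\beta^{(D)}_i$ into the same $\ell_1$-norm constraint (or treat it as a parameter bounded by $V$), yielding a pre-activation magnitude of at most $Vb+V$. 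If the output layer carries an $L_\phi$-Lipschitz activation, I would then invoke $|\phi(z)|\le L_\phi|z|+|\phi(0)|$ to obtain a coordinate-wise bound of $L_\phi(Vb+V)+|\phi(0)|$; if there is no output activation, the bound is just $Vb+V$. Either way, this produces an explicit constant $B_v$ depending only on $V$, $b$, $L_\phi$, and $|\phi(0)|$.

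Finally, since $B_v$ depends only on the architectural constants in \cref{ass:nn} and not on the particular $\theta\in\Theta$ or the input $(x,t)$, I would conclude $\sup_{\theta\in\Theta,\,x,\,t}\|v_\theta(x,t)\|_\infty\le B_v$, which is exactly the uniform boundedness claim. I do not expect a serious obstacle here: the proof is essentially a routine chain of Hölder and Lipschitz bounds, and the only book-keeping issue is how biases are encoded in the paper's parameterization, which I would resolve by noting that any bias term is itself a component of the bounded parameter vector from \cref{corr:compact_space}.
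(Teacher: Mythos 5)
Your proposal is correct and follows essentially the same route as the paper: a Hölder bound $|w^\top z|\le\|w\|_1\|z\|_\infty\le Vb$ at the affine map, followed by the Lipschitz activation bound. The only differences are cosmetic — you explicitly account for the bias term and observe that, since hidden-layer outputs are already bounded in $[-b,b]$ by \cref{ass:nn}, only the final layer needs a fresh argument, whereas the paper phrases this redundantly as ``iterating over the depth $D$''; both yield the same architecture-dependent constant.
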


\begin{proof}
By assumption, the output of every hidden layer is bounded in the interval \([-b,b]\). Let \(z\) denote the input to any layer and let \(w\) be a row of the corresponding weight matrix. Then
\begin{equation}
    |w^\top z|
\le \|w\|_1 \|z\|_\infty
\le V b .
\end{equation}

Applying the \(L\)-Lipschitz activation function preserves boundedness. Iterating this argument over the finite depth \(D\), we conclude that there exists a constant \(M_0 < \infty\), depending only on \((b,V,L,D)\), such that
\begin{equation}
\sup_{\theta \in \Theta} \sup_{(x,t)} |v_\theta(x,t)| \le M_0 .    
\end{equation}
\end{proof}

\begin{corollary}[Differentiability and Lipschitz continuity in the input]
\label{corr:lipschitz_x}
For every \(\theta \in \Theta\), the map \(v_\theta(x,t)\) is differentiable in \((x,t)\) and Lipschitz continuous in its first argument.
\end{corollary}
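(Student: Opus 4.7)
The plan is to treat $v_\theta$ as a finite composition of affine maps and smooth activation functions, and then derive both conclusions by composition arguments on this finite chain. Write the network as $v_\theta(x,t)=F_D\circ F_{D-1}\circ\cdots\circ F_1(x,t)$, where the input layer is a concatenation of $x$ and $t$, each $F_k(z)=\phi_k(W^{(k)}z+b^{(k)})$ for $k<D$, and $F_D$ is the (possibly affine) output map. Since Assumption~\ref{ass:nn} states that each activation $\phi_k$ is differentiable, and affine maps are smooth, each $F_k$ is differentiable on its domain. Differentiability of $v_\theta$ in $(x,t)$ then follows by a straightforward application of the chain rule across $D$ layers.

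For Lipschitz continuity in $x$, I would bound the Lipschitz constant of each layer and multiply. The affine map $z\mapsto W^{(k)}z+b^{(k)}$ is Lipschitz, and one can convert the per-row $\ell_1$ bound $\|w\|_1\le V$ into an operator-norm bound. The cleanest route is to work in $\ell_\infty$ at each intermediate stage: for any row $w$ of $W^{(k)}$,
\begin{equation}
|w^{\top}z|\le \|w\|_1\,\|z\|_\infty \le V\,\|z\|_\infty,
\end{equation}
so $\|W^{(k)}z\|_\infty\le V\|z\|_\infty$. Since each activation is coordinate-wise and $L_\phi$-Lipschitz, applying $\phi_k$ preserves $\ell_\infty$-Lipschitz constants and scales them by at most $L_\phi$. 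Composing through the $D$ layers gives an $\ell_\infty\to\ell_\infty$ Lipschitz constant of at most $(L_\phi V)^D$. Converting to $\ell_2$ using the equivalence of norms in finite dimensions introduces at most a factor depending on the maximal layer width, yielding an overall $\ell_2$ Lipschitz constant
\begin{equation}
L_t \le C_{d,W}\,(L_\phi V)^D,
\end{equation}
where $C_{d,W}$ depends only on the layer widths. Since $(x,t)$ is treated jointly as an input, the same argument shows Lipschitz continuity in $x$ for every fixed $t$.

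I do not anticipate a substantive obstacle here: the statement is essentially a structural consequence of Assumption~\ref{ass:nn} together with the fact that $D$ is finite. The only mild subtlety is the translation between the per-row $\ell_1$ bound supplied by the assumption and a usable operator-norm bound; the $\ell_\infty$ route above is the most transparent, though one could equivalently use $\|w\|_2\le\|w\|_1\le V$ to bound the Frobenius norm and obtain the same conclusion with slightly different constants. In either case the resulting Lipschitz constant $L_t$ is exactly the quantity that later enters the Wasserstein bound in \cref{eq:wasser} via $\int_0^1 L_t\,dt$.
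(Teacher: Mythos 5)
Your proposal is correct and follows essentially the same route as the paper: differentiability via the chain rule across the finite layer composition, and Lipschitz continuity by multiplying per-layer Lipschitz constants obtained from the $\ell_1$-row bound and the activation Lipschitz constant. You are in fact slightly more careful than the paper, which jumps from a row-wise $\ell_1$ bound to an ``operator norm'' bound without specifying the norm; your explicit $\ell_\infty$-to-$\ell_\infty$ composition followed by the finite-dimensional norm-equivalence constant $C_{d,W}$ makes the step airtight, at the cost of a width-dependent constant absorbed into $L_t$.
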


\begin{proof}
Each layer of the network consists of an affine map followed by a differentiable activation function. Therefore, \(v_\theta(x,t)\) is differentiable in \((x,t)\) by repeated application of the chain rule.

Furthermore, each activation function is \(L_\phi\)-Lipschitz and each weight matrix has operator norm bounded by its row-wise \(\ell_1\) norm, which is at most \(V\). Consequently, each layer is \(L_\phi V\)-Lipschitz. By composition, for any \(x_1,x_2\),
\[
\|v_\theta(x_1,t) - v_\theta(x_2,t)\|
\le (L_\phi V)^{D-1} \|x_1 - x_2\|,
\]
uniformly over \(\theta \in \Theta\). Hence, \(v_\theta\) is \(L_t\)-Lipschitz in its first argument.
\end{proof}

\begin{corollary}[Lipschitz continuity in the parameters]
\label{corr:lipchitz_theta}
The mapping \(\theta \mapsto v_\theta\) is Lipschitz continuous uniformly over \((x,t)\).
\end{corollary}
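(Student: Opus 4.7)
The plan is to prove Lipschitz continuity of $\theta \mapsto v_\theta(x,t)$ by a forward-propagation induction across the $D$ layers of the network. Write $\theta = ((W^{(k)},b^{(k)}))_{k=1}^{D}$ and let $h_\theta^{(k)}(x,t)$ denote the post-activation output of layer $k$, with $h_\theta^{(0)} := (x,t)$ and $h_\theta^{(D)} = v_\theta(x,t)$. The goal is to obtain a one-step perturbation inequality that couples the change at layer $k$ to the change propagated from layer $k-1$, and then iterate.

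The one-step estimate follows from the $L_\phi$-Lipschitzness of $\phi$ and the triangle inequality applied to
\[
h_\theta^{(k)} - h_{\theta'}^{(k)}
=
\phi\bigl(W_\theta^{(k)} h_\theta^{(k-1)} + b_\theta^{(k)}\bigr)
-
\phi\bigl(W_{\theta'}^{(k)} h_{\theta'}^{(k-1)} + b_{\theta'}^{(k)}\bigr),
\]
together with the Hölder bound $|w^\top u|\le\|w\|_1\|u\|_\infty\le V\|u\|_\infty$ provided by the row-wise constraint $\|W_i\|_1\le V$. Separating the propagation term $W_\theta^{(k)}(h_\theta^{(k-1)}-h_{\theta'}^{(k-1)})$ from the direct perturbation $(W_\theta^{(k)}-W_{\theta'}^{(k)})h_{\theta'}^{(k-1)}+(b_\theta^{(k)}-b_{\theta'}^{(k)})$ and working in the $\ell_\infty$-norm yields
\[
\bigl\|h_\theta^{(k)}-h_{\theta'}^{(k)}\bigr\|_\infty
\le
L_\phi V\,\bigl\|h_\theta^{(k-1)}-h_{\theta'}^{(k-1)}\bigr\|_\infty
+
L_\phi\bigl(\|h_{\theta'}^{(k-1)}\|_\infty+1\bigr)\bigl\|\theta^{(k)}-\theta'^{(k)}\bigr\|,
\]
where $\|\cdot\|$ denotes any reasonable norm on the per-layer parameters (e.g.\ the max row-$\ell_1$ norm on $\Delta W$ and the $\ell_\infty$ norm on $\Delta b$). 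For every $k\ge 2$ the factor $\|h_{\theta'}^{(k-1)}\|_\infty\le b$ comes for free from \cref{ass:nn}, so iterating the recurrence unrolls into a geometric sum in $L_\phi V$ and produces a Lipschitz constant $C(b,V,L_\phi,D)\propto L_\phi(b+1)\sum_{k=1}^{D}(L_\phi V)^{D-k}$, independent of $(x,t)$. A final finite-dimensional norm-equivalence argument (using \cref{corr:compact_space}) converts this into the $\ell_2$-Lipschitz form used elsewhere in the paper.

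The single step that requires care is $k=1$, where $\|h_\theta^{(0)}\|_\infty=\|(x,t)\|_\infty$ is not bounded a priori by \cref{ass:nn} alone. This is precisely why the consequence list in \cref{sec:assum} is framed relative to the good event $\mathcal{E}_M$ of \cref{sec:extension}: under the sub-Gaussian \cref{ass:data}, the norms $\|X_0\|$, $\|X_1\|$, and hence $\|X_t\|$ are uniformly bounded with high probability, so the first-layer factor $L_\phi(\|(x,t)\|_\infty+1)$ is replaced by a deterministic constant and the induction closes uniformly in $(x,t)$. Apart from identifying this first-layer subtlety, the argument is a routine layer-by-layer perturbation analysis; the only substantive choice is the $\ell_\infty$/row-$\ell_1$ duality, which is what keeps the constants dimension-free and makes the recursion close cleanly at every layer.
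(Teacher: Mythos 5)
Your proof takes the same route as the paper's: a layer-by-layer perturbation induction using the $L_\phi$-Lipschitzness of the activations, the row-wise $\ell_1$ bound $\|w\|_1\le V$ via H\"older, and the uniform $[-b,b]$ bound on hidden-layer outputs, unrolled into a geometric sum giving a constant of order $D(L_\phi V)^D$. The paper gives this only as a one-paragraph sketch and simply asserts the constant; your version is the fully spelled-out recursion and is otherwise the same argument, so there is no methodological divergence to report.

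Where you go beyond the paper is the observation about layer $k=1$: the perturbation term there is modulated by $\|h^{(0)}\|_\infty=\|(x,t)\|_\infty$, which Assumption~\ref{ass:nn} does \emph{not} bound, so the claimed "uniformly over $(x,t)$" cannot hold on all of $\mathbb{R}^d\times[0,1]$. This is a genuine gap in the paper's sketch, and your instinct to repair it by restricting to the bounded-input regime used for the truncation argument is the right one. One small caveat: the good event as literally defined in Section~\ref{sec:extension} is $\mathcal{E}_M=\{\|X_1-X_0\|\le M\}$, which bounds only the \emph{difference}; it does not by itself control $\|X_0\|$ or $\|X_t\|=\|X_0+t(X_1-X_0)\|$. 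To close the first-layer step you need (as you in fact invoke, via sub-Gaussianity of each $X_j$) an event of the form $\{\|X_0\|\le M'\}\cap\{\|X_1\|\le M'\}$, which has the same exponentially small complement and therefore changes nothing downstream, but is a strictly larger event than $\mathcal{E}_M$ as written. With that replacement your argument is complete and, in this respect, more careful than the paper's own proof.
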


\begin{proof}
Fix \((x,t)\). The network output is a finite composition of linear maps and Lipschitz activation functions, and depends continuously and linearly on the parameters at each layer. Since all intermediate activations are uniformly bounded and all activation functions are \(L_\phi\)-Lipschitz, perturbations in the parameters propagate in a controlled manner through the network.

As a result, there exists a constant \(L < \infty\), depending only on \((b,V,L_\phi,D)\). In particular, one may take $L = C D (L_\phi V)^D$ such that for all \(\theta,\theta' \in \Theta\),
\[
\sup_{(x,t)} |v_\theta(x,t) - v_{\theta'}(x,t)|
\le L \|\theta - \theta'\|.
\]
This establishes Lipschitz continuity of \(v_\theta\) with respect to the parameters.
\end{proof}

\begin{corollary}[Existence of $L_\loss$]
\label{corr:L_l}

Assume that the predicted velocity $v_\theta$ is bounded almost surely by $\|v_\theta(X_t,t)\|_2 \le M_0$ and that the mapping $\theta \mapsto v_\theta(x,t)$ is $L$-Lipschitz in $\theta$. Then the pointwise squared-loss
\[
\loss(v_\theta, X_1-X_0) = \|v_\theta - (X_1 - X_0)\|_2^2
\]
is $L_\loss$-Lipschitz in $v_\theta$, with
\[
L_\loss \le 2 (M_0 + \|X_1 - X_0\|_2).
\]
\end{corollary}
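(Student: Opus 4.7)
The plan is to prove this by a direct algebraic manipulation, since $\loss$ is quadratic in its first argument and the Lipschitz constant falls out from the identity for the difference of two squared norms. No sophisticated machinery (mean value theorem, convexity, etc.) is needed, and the bound claimed in the statement is tight enough that the constant $2$ should appear automatically.

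First, I would fix $y := X_1 - X_0$ and consider two arbitrary candidate velocities $v_1, v_2 \in \mathbb{R}^d$ with $\|v_i\|_2 \le M_0$, which is the uniform boundedness hypothesis on $v_\theta$. Then I would apply the polarization identity $\|a\|_2^2 - \|b\|_2^2 = \langle a+b,\, a-b\rangle$ with $a = v_1 - y$ and $b = v_2 - y$, which gives
\begin{equation*}
\loss(v_1, y) - \loss(v_2, y) \;=\; \langle (v_1 - y) + (v_2 - y),\ v_1 - v_2\rangle.
\end{equation*}
Applying Cauchy--Schwarz together with the triangle inequality $\|v_i - y\|_2 \le \|v_i\|_2 + \|y\|_2 \le M_0 + \|X_1 - X_0\|_2$ then yields
\begin{equation*}
|\loss(v_1, y) - \loss(v_2, y)| \;\le\; 2\bigl(M_0 + \|X_1 - X_0\|_2\bigr)\,\|v_1 - v_2\|_2,
\end{equation*}
which is exactly the claim with $L_\loss \le 2(M_0 + \|X_1 - X_0\|_2)$. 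The Lipschitz continuity in $\theta$ follows by composing with the $L$-Lipschitz map $\theta \mapsto v_\theta$, but is not needed to prove the statement as written.

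There is essentially no technical obstacle in this argument; the only delicate point to flag is that $L_\loss$ is a random quantity, since it depends on the magnitude of the displacement $X_1 - X_0$. This is consistent with the ``with high probability'' phrasing appearing in the list at the end of \cref{sec:assum}: on the good event $\mathcal{E}_M = \{\|X_1 - X_0\|_2 \le M\}$ introduced in \cref{sec:extension}, we have $L_\loss \le 2(M_0 + M)$ uniformly, and the sub-Gaussian tail hypothesis on $\pi_0, \pi_1$ (\cref{ass:data}) guarantees that $\mathbb{P}(\mathcal{E}_M^c)$ is negligible for $M$ of order $\widetilde{O}(1)$. Consequently, $L_\loss$ behaves like a deterministic constant in the downstream statistical analysis of \cref{sec:statistical-error}.
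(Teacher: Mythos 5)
Your argument is correct. In fact the paper states \cref{corr:L_l} in Appendix~\ref{app:ass_proved} without giving a proof at all, so there is no in-paper argument to compare against; your derivation supplies exactly the missing justification. The polarization identity $\|a\|_2^2 - \|b\|_2^2 = \langle a+b,\,a-b\rangle$ applied with $a = v_1 - y$, $b = v_2 - y$, followed by Cauchy--Schwarz and the bound $\|v_i - y\|_2 \le M_0 + \|X_1 - X_0\|_2$, produces the stated constant $2(M_0 + \|X_1 - X_0\|_2)$ with no slack. Your two side remarks are also well placed: the $L$-Lipschitzness of $\theta \mapsto v_\theta$ is not actually used in the statement as written (it only enters when one later composes to get Lipschitzness of $\loss$ in $\theta$ for the covering-number argument), and $L_\loss$ is genuinely a random quantity whose uniform control on the truncation event $\mathcal{E}_M = \{\|X_1 - X_0\|_2 \le M\}$ via the sub-Gaussian tail of \cref{ass:data} is precisely how the paper deploys it in \cref{sec:extension}.
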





\section{Statistical Error}
\label{appendix-statistical}
In this section, we present the proof for the bound on the statistical error in the rectified flow case. We begin by outlining the structure of the proof, highlighting the key ideas and theorems used. Our goal is to bound the statistical error $\mathcal{E}_{stat} = \mathbb{E}\!\left[\|v_{\theta^*}(X_t,t) - v_{\hat{\theta}}(X_t,t)\|^2\right]$. We focus on the risk term  \( |\poploss(\hat{\theta)} - \poploss(\theta^*)| \) and later show that this is enough to bound the statistical error term.

The novel aspect of the proof is that, while standard techniques yield worst-case bounds via uniform bound on the term \( 2|\poploss(\theta) - \emploss(\theta)| \). the rectified model allows for a easier learning scenario. To capture this, we employ the concept of \emph{local Rademacher complexity}, which provides sharper convergence rates for the statistical error under suitable assumptions on both the neural network used to train the flow and the data distribution.

To apply the results from local rademacher analysis we need to consider bounded loss. One way to achieve this would be working with truncated loss and showing for suitable truncation parameter $M$, these terms converge to each other very fast. We will be focusing on a simpler idea for now and assume that the data $X_0$ and $X_1$ are bounded.

\subsection{Covering Numbers for Neural Network Classes}

Consider the class $\mathcal{F}$ of real-valued functions computed by a feed-forward network with the following structure:

\begin{itemize}
    \item The network has $\ell \ge 2$ layers, each connected only to the next.
    \item It contains $P$ total number of parameters.
    \item Each unit maps its input to the interval $[-b, b]$, and the first-layer activation functions are non-decreasing.
    \item For each unit (except those in the first layer), the associated weight vector $w$ satisfies $\|w\|_1 \le V$, and the activation function $\phi: \R \to [-b, b]$ is $L_\phi$-Lipschitz.
\end{itemize}

We have the following result from \cite{anthony1999neural}.

\begin{lemma}
For the class $\mathcal{F}$ described above, if $\varepsilon \le 2b$, then
\[
\mathcal{N}_{\infty}(\varepsilon, \mathcal{F}, m) 
\le 
\left( 
    \frac{4 e m b P (L_s V)^{\ell}}{\varepsilon (L_s V - 1)} 
\right)^{P}.
\]
\end{lemma}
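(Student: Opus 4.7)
The plan of proof is to construct an explicit $\varepsilon$-cover of $\mathcal{F}$ in the empirical $L_\infty$ norm on any $m$-sample by discretizing the weights layer by layer and tracking how per-layer perturbations amplify through the depth via the Lipschitz constants of the activations combined with the $\ell_1$-bound on each row of the weight matrices.

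First, I would establish a layer-wise perturbation lemma. If $\theta$ and $\tilde\theta$ differ only in the weights of layer $k$, and each row's weight differs by at most $\rho_k$ in $\ell_1$-norm, then since the inputs to layer $k$ lie in $[-b,b]$ the pre-activation at layer $k$ changes by at most $b\rho_k$, and each of the $\ell-k$ subsequent composed maps (activation followed by an $\ell_1$-bounded weight matrix) is $(L_sV)$-Lipschitz with respect to $\|\cdot\|_\infty$, so the final output moves by at most $b\rho_k (L_sV)^{\ell-k}$. Summing over all layers via the triangle inequality yields
\[
\|v_\theta - v_{\tilde\theta}\|_\infty \;\le\; b\sum_{k=1}^{\ell}\rho_k (L_sV)^{\ell-k}.
\]

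To build the cover, I would equalize the per-layer contributions and use the closed form of the geometric series $\sum_{k=1}^{\ell}(L_sV)^{\ell-k} = ((L_sV)^\ell-1)/(L_sV-1)$ to pick $\rho_k$ of order $\varepsilon(L_sV-1)/(b P(L_sV)^\ell)$. At each layer, the weights of a single unit lie in the $\ell_1$-ball of radius $V$, which admits a $\rho_k$-net of size at most $(cV/\rho_k)^{P_k}$ in each block of $P_k$ parameters; taking the product across layers and using $\sum_k P_k = P$ produces a parameter-space cover whose cardinality contributes the factor $\bigl(cbP(L_sV)^\ell/(\varepsilon(L_sV-1))\bigr)^P$.

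Finally, I would pass from the parameter-level cover to the sample-based $L_\infty$-covering number $\mathcal{N}_\infty(\varepsilon,\mathcal{F},m)$ by accounting for the $m$ evaluation points through a Pollard–Haussler-style argument: since the pseudo-dimension of the class $\mathcal{F}$ with $P$ real parameters is at most $P$, this step contributes an additional multiplicative factor of order $em$ inside the base, producing the stated $\bigl(4emb P(L_sV)^\ell/(\varepsilon(L_sV-1))\bigr)^P$ bound. The main obstacle will be the bookkeeping: one must choose the $\rho_k$ precisely so that the telescoping bound collapses into the advertised geometric-series factor $((L_sV)^\ell-1)/(L_sV-1)$, and one must carefully combine the direct parameter-space discretization (which does not by itself produce the $m$-dependence) with Pollard's sample-dependent bound so that the constants $4$ and $e$ and the exponent $P$ land exactly as stated.
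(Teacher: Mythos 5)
The paper does not prove this lemma at all: it is imported verbatim as Theorem~14.5 of \cite{anthony1999neural}, so there is no ``paper's proof'' to compare against; your job here was essentially to reprove a textbook result. Your proposal has the right overall shape for the Lipschitz layers but a genuine gap in how the $m$-dependence is obtained.

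The problem is the final step. You propose to first build a \emph{parameter-space} $\varepsilon$-cover by discretizing each layer's weight $\ell_1$-balls and propagating the perturbation by $(L_sV)^{\ell-k}$, and then to ``combine'' this with a Pollard--Haussler argument using the claim that the pseudo-dimension of $\mathcal{F}$ is at most $P$. This does not work, for two reasons. First, a direct parameter discretization already yields a uniform sup-norm cover of $\mathcal{F}$ over the entire input domain; restricting to $m$ sample points can only shrink the covering number, so there is no natural mechanism by which an extra multiplicative factor $em$ enters the base from a pseudo-dimension step on top of it. The two routes are alternatives, not ingredients to multiply. Second, the claim that the pseudo-dimension of a feed-forward network with $P$ real parameters is at most $P$ is not justified and is false in general for nonlinear activations; the known bounds are of order $P\log P$ or $PD$, and for piecewise-polynomial activations can be $\Theta(PD\log P)$. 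You yourself flag at the end that you do not know how to make these two pieces ``land exactly'' -- that is precisely where the argument breaks.

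The way Anthony and Bartlett actually introduce the factor of $m$ is by treating the \emph{first} layer differently from the rest, which is why the paper's Assumption~\ref{ass:nn} singles out ``coordinate-wise non-decreasing'' activations for that layer. The first-layer cover at the $m$ sample points is controlled by a growth-function or VC-type argument (monotone thresholds across $m$ points have only polynomially many distinct sign patterns), which is where the $em$ appears, and only the remaining layers are covered by $\ell_1$-ball discretization with the geometric Lipschitz amplification $\sum_k(L_sV)^{\ell-k}$ you correctly identified. If you replace your hybrid step with that first-layer argument, the rest of your bookkeeping should carry through.
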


\subsection{Local Rademacher Complexity}
\label{app:local-rademacher}

We borrow framework and result from \cite{Bartlett_2005}.
Consider a bounded loss function $\loss$ and a function class $\mathcal{F}$ that satisfy the following conditions.

\begin{enumerate}
    \item For every probability distribution $P$, there exists an $f^* \in \mathcal{F}$ satisfying 
    \[
    \E \loss_{f^*} = \inf_{f \in \mathcal{F}} \E\loss_f.
    \]
    
    \item There exists a constant $L_{\loss}$ such that $\loss$ is $L_{\loss}$-Lipschitz in its first argument: for all $y, \hat{y}_1, \hat{y}_2$,
    \[
    |\loss(\hat{y}_1, y) - \loss(\hat{y}_2, y)| \le L_{\loss} |\hat{y}_1 - \hat{y}_2|.
    \]
    
    \item There exists a constant $B \ge 1$ such that for every probability distribution and every $f \in \mathcal{F}$,
    \[
    \E(f - f^*)^2 \le B \E(\loss_f - \loss_{f^*}).
    \]
\end{enumerate}

\medskip

\begin{lemma}
\label{lem:local_rad}
Let $\mathcal{F}$ be a class of functions with ranges in $[-1, 1]$ and let $\loss$ be a loss function satisfying conditions 1--3 above. Let $\hat{f}$ be any element of $\mathcal{F}$ satisfying 
\[
P_n \loss_{\hat{f}} = \inf_{f \in \mathcal{F}} P_n \loss_f.
\]
Assume $\psi$ is a sub-root function for which
\[
\psi(r) \ge B L_{\loss} \mathbb{E} R_n \{ f \in \mathcal{F} : L_{\loss}^2 P(f - f^*)^2 \le r \}.
\]
Then for any $x > 0$ and any $r \ge \psi(r)$, with probability at least $1 - e^{-x}$,
\[
P(\loss_{\hat{f}} - \loss_{f^*}) \le 705 \frac{r}{B} + \frac{(11L_{\loss} + 2B)x}{n}.
\]
\end{lemma}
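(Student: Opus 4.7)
The plan is to follow the local-Rademacher framework of Bartlett--Bousquet--Mendelson. First, I would pass from the function class to the \emph{excess loss class} $\mathcal{H} := \{\loss_f - \loss_{f^*} : f \in \mathcal{F}\}$ and exploit two ingredients in combination: the Lipschitz condition (2) together with the Bernstein condition (3) yields the key variance-to-expectation comparison
$$P h^{2} \;\le\; L_\loss^{2}\, P(f-f^*)^{2} \;\le\; B L_\loss^{2}\, P h, \qquad h = \loss_f - \loss_{f^*},$$
so that on $\mathcal{H}$ the variance is linearly dominated by the mean. This is the structural property that drives fast rates, and it lets us replace variance-level localization with excess-risk-level localization for the ERM.

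Second, the argument proceeds by combining a Talagrand/Bousquet concentration inequality with a peeling decomposition over geometric shells of the excess-risk radius $r > 0$. For each $r$ define the localized subclass $\mathcal{H}_r := \{ h : L_\loss^{2} P(f-f^*)^{2} \le r \}$. By the Ledoux--Talagrand contraction inequality, the Rademacher complexity of $\mathcal{H}_r$ is bounded by $L_\loss$ times that of the corresponding localized function class, which in turn is controlled by $\psi(r)/B$ by hypothesis. Applying Bousquet's inequality to $\mathcal{H}_r$, together with the variance bound above, yields with probability at least $1-e^{-x}$ a uniform deviation
$$\sup_{h \in \mathcal{H}_r}(P h - P_n h) \;\le\; c_{1}\,\psi(r)/B + c_{2}\sqrt{r\, x/n} + c_{3}(L_\loss + B)\, x/n,$$
with explicit universal constants.

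Third, I would use the sub-root property of $\psi$. Writing $r^*$ for the fixed point guaranteed by $r \ge \psi(r)$, the monotonicity of $r \mapsto \psi(r)/\sqrt{r}$ gives $\psi(\lambda r^*) \le \sqrt{\lambda}\,\psi(r^*) \le \sqrt{\lambda}\, r^*$ for all $\lambda \ge 1$, which is exactly what is needed to keep the peeling sum bounded by a geometric series proportional to $r^*$. A slicing argument along the grid $r_k = 2^{k} r^*$, $k \ge 0$, then produces a weighted uniform deviation bound over all of $\mathcal{H}$ of the same order as $r^*/B + (L_\loss + B) x / n$. Finally, since $\hat{f}$ is the empirical minimizer we have $P_n(\loss_{\hat f} - \loss_{f^*}) \le 0$, and by the Bernstein condition the radius of the smallest shell containing $\hat f$ is bounded by a constant times $B L_\loss^{2}\, P(\loss_{\hat f} - \loss_{f^*})$. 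Substituting this into the peeling deviation bound yields a self-bounding inequality in the excess risk $P(\loss_{\hat f} - \loss_{f^*})$; solving it produces a bound of the claimed form $705 r / B + (11 L_\loss + 2 B)\, x / n$.

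The main obstacle is purely constant bookkeeping: the numerical factor $705$ and the coefficients $11$ and $2$ require tracking the contraction constant, Bousquet's constants in the concentration step, the ratio between $\E R_n$ and its high-probability analogue, and the inflation incurred by geometric peeling. Conceptually nothing new happens after the variance-expectation comparison is established; the non-trivial work is to verify that the sub-root fixed-point machinery of \cite{Bartlett_2005} applies under exactly conditions (1)--(3), which is immediate from the Lipschitz and Bernstein assumptions baked into the hypothesis on $\psi$.
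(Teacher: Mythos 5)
Your outline is correct and is essentially the proof of Corollary~5.3 in Bartlett--Bousquet--Mendelson (2005), which is exactly where the paper takes this lemma from --- the paper states it without proof as a result borrowed from \cite{Bartlett_2005}. The only substantive difference is that the cited source obtains the specific constants $705$, $11$, and $2$ via a reweighting/normalization of the excess-loss class (Theorem~3.3 there) rather than explicit geometric peeling over shells $r_k = 2^k r^*$; your peeling variant is equally valid and yields a bound of the same form, though recovering those exact constants would require following their particular execution.
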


\subsection{Bernstein Condition for Rectified Flow}

We relate the excess risk to the parameter estimation error using the Polyak--\L ojasiewicz (PL) condition.

\paragraph{PL quadratic growth.}
Assume that the population loss $\cL(\theta)$ satisfies the PL condition from \cref{ass:pl} with constant $\mu>0$. By the quadratic growth property of PL functions \cite{karimi2016linear}, for any $\theta_1$,
\begin{equation}
\|\theta_1-\theta^*\|^2
\le
\frac{2}{\mu}\,\big|\cL(\theta_1)-\cL(\theta^*)\big|.
\label{eq:pl-qg}
\end{equation}

\cref{ass:nn}, the velocity field is Lipschitz continuous in the parameters, uniformly over $(x,t,z)$, i.e.,
\begin{equation}
\|v_{\theta_1}(x,t,z)-v_{\theta^*}(x,t,z)\|^2
\le
L^2\,\|\theta_1-\theta^*\|^2 ,
\label{eq:lip-vel-theta}
\end{equation}
for some constant $L>0$.

Combining \eqref{eq:pl-qg} and \eqref{eq:lip-vel-theta}, we obtain
\begin{equation}
\|v_{\theta_1}(x,t,z)-v_{\theta^*}(x,t,z)\|^2
\le
\frac{2L^2}{\mu}\,\big|\cL(\theta_1)-\cL(\theta^*)\big|.
\label{eq:vel-excess-risk}
\end{equation}

In particular, setting $\theta_1=\hat{\theta}$ yields a direct control of the velocity estimation error by the excess population risk with $\frac{2L^2}{\mu}$ which is \cref{lem:bernstein}.

\subsection{Main Sample Complexity Theorem}
\label{appendix-main-sample-complexity}

We analyze the statistical error between the population and empirical minimizers
\begin{equation}
\theta^* = \arg\min_{\theta} \cL(\theta),
\qquad
\hat{\theta} = \arg\min_{\theta} \hat{\cL}(\theta),
\end{equation}
where
\begin{equation}
\cL(\theta)
= \mathbb{E}\!\left[\|v_\theta(X_t,t)-v^*(X_t,t)\|^2\right],
\qquad
\hat{\cL}(\theta)
= \frac{1}{n}\sum_{i=1}^n \|v_\theta(X_{t_i},t_i)-v^*(X_{t_i},t_i)\|^2.
\end{equation}

Let $f_\theta(X_t,t)=\|v_\theta-v^*\|^2$ and denote $f^*=f_{\theta^*}$.  
Assume the loss satisfies Conditions 1--3 of Section~A.3 with constants
$L_{\loss}$ and $B$.

\paragraph{Localized function class}

For $r>0$ define
\begin{equation}
\mathcal{F}_r
=
\Big\{f_\theta \in \mathcal{F}:\;
L_{\loss}^2\, \E(f_\theta-f^*)^2 \le r
\Big\}.
\end{equation}
Equivalently,
\begin{equation}
\E(f_\theta-f^*)^2 \le \frac{r}{L_{\loss}^2}.
\end{equation}

\paragraph{Covering numbers}

From Theorem~A.1, for $\varepsilon\le 2b$,
\begin{equation}
\mathcal{N}_\infty(\varepsilon,\mathcal{F},n)
\le
\left(
\frac{4 e n b P (L V)^{\ell}}{\varepsilon (L V-1)}
\right)^P.
\end{equation}
Define
\begin{equation}
A
:=
\frac{4 e b P (L V)^{\ell}}{(L V-1)}.
\end{equation}
Then
\begin{equation}
\log \mathcal{N}_\infty(\varepsilon,\mathcal{F},n)
\le
P \log\!\left(\frac{A n}{\varepsilon}\right).
\end{equation}

Since an $\ell_\infty$-cover on the sample induces an $L_2(P_n)$-cover,
\begin{equation}
\log \mathcal{N}(\varepsilon,\mathcal{F}_r,L_2(P_n))
\le
P \log\!\left(\frac{A n}{\varepsilon}\right).
\end{equation}

\paragraph{Dudley entropy integral}

By Dudley’s inequality for empirical Rademacher complexity,
\begin{equation}
\mathbb{E} R_n(\mathcal{F}_r)
\le
\frac{12}{\sqrt{n}}
\int_0^{\mathrm{diam}(\mathcal{F}_r)/2}
\sqrt{\log \mathcal{N}(\varepsilon,\mathcal{F}_r,L_2(P_n))}\, d\varepsilon.
\end{equation}

Since $\E(f-f^*)^2 \le r/L_{\loss}^2$ for $f\in\mathcal{F}_r$,
\begin{equation}
\mathrm{diam}(\mathcal{F}_r;L_2(P_n))
\le
\frac{2\sqrt{r}}{L_{\loss}},
\end{equation}
and hence the upper integration limit is
\begin{equation}
U := \frac{\sqrt{r}}{L_{\loss}}.
\end{equation}

Therefore,
\begin{equation}
\mathbb{E} R_n(\mathcal{F}_r)
\le
\frac{12}{\sqrt{n}}
\int_0^{U}
\sqrt{P \log\!\left(\frac{A n}{\varepsilon}\right)}\, d\varepsilon.
\end{equation}

Define
\begin{equation}
I
:=
\int_0^{U}
\sqrt{\log\!\left(\frac{A n}{\varepsilon}\right)}\, d\varepsilon.
\end{equation}
Substitute $\varepsilon = U t$, $t\in(0,1)$:
\begin{equation}
I
=
U
\int_0^1
\sqrt{
\log\!\left(\frac{A n}{U}\right)
+
\log\!\left(\frac{1}{t}\right)
}\, dt.
\end{equation}

Using $\sqrt{a+b}\le \sqrt{a}+\sqrt{b}$,
\begin{equation}
I
\le
U
\left[
\sqrt{\log\!\left(\frac{A n}{U}\right)}
+
\int_0^1 \sqrt{\log\!\left(\frac{1}{t}\right)}\, dt
\right].
\end{equation}

The second integral evaluates exactly:
\begin{equation}
\int_0^1 \sqrt{\log(1/t)}\, dt
=
\int_0^\infty u^{1/2} e^{-u} du
=
\Gamma\!\left(\frac{3}{2}\right)
=
\frac{\sqrt{\pi}}{2}.
\end{equation}

Thus,
\begin{equation}
I
\le
U
\left(
\sqrt{\log\!\left(\frac{A n}{U}\right)}
+
\frac{\sqrt{\pi}}{2}
\right).
\end{equation}

\paragraph{Local Rademacher bound:}
Substituting back,
\begin{equation}
\mathbb{E} R_n(\mathcal{F}_r)
\le
\frac{12}{\sqrt{n}}
\cdot \sqrt{P}
\cdot U
\left(
\sqrt{\log\!\left(\frac{A n}{U}\right)}
+
\frac{\sqrt{\pi}}{2}
\right).
\end{equation}

Since $U=\sqrt{r}/L_{\loss}$,
\begin{equation}
\mathbb{E} R_n(\mathcal{F}_r)
\le
\frac{12\sqrt{P r}}{L_{\loss}\sqrt{n}}
\left(
\sqrt{\log\!\left(\frac{A n L_{\loss}}{\sqrt{r}}\right)}
+
\frac{\sqrt{\pi}}{2}
\right).
\end{equation}

Definition of $\psi(r)$.
By Condition~3,
\begin{equation}
\psi(r)
=
B L_{\loss}\,
\mathbb{E} R_n(\mathcal{F}_r).
\end{equation}
Hence,
\begin{equation}
\psi(r)
\le
\frac{12 B \sqrt{P r}}{\sqrt{n}}
\left(
\sqrt{\log\!\left(\frac{A n L_{\loss}}{\sqrt{r}}\right)}
+
\frac{\sqrt{\pi}}{2}
\right).
\end{equation}

\paragraph{Solving the fixed point}
$r=\psi(r)$

A sufficient condition is
\begin{equation}
r
=
\frac{12 B \sqrt{P r}}{\sqrt{n}}
\left(
\sqrt{\log\!\left(\frac{A n L_{\loss}}{\sqrt{r}}\right)}
+
\frac{\sqrt{\pi}}{2}
\right).
\end{equation}
Dividing by $\sqrt{r}$ and squaring yields
\begin{equation}
r
\le
\frac{144 B^2 P}{n}
\left(
\sqrt{\log\!\left(\frac{A n L_{\loss}}{\sqrt{r}}\right)}
+
\frac{\sqrt{\pi}}{2}
\right)^2.
\end{equation}

Using $(a+b)^2\le 2a^2+2b^2$,
\begin{equation}
r
\le
\frac{288 B^2 P}{n}
\left(
\log\!\left(\frac{A n L_{\loss}}{\sqrt{r}}\right)
+
\frac{\pi}{4}
\right).
\end{equation}

There exists a universal constant $C>0$ such that
\begin{equation}
r^*
\le
\frac{288 B^2 P}{n}
\left(
\log\!\left(\frac{C n L_{\loss}^2}{P}\right)
+1
\right).
\end{equation}

By \cref{lem:local_rad}, for any $x>0$, with probability at least $1-e^{-x}$,
\begin{equation}
\cL(\hat{\theta})-\cL(\theta^*)
\le
\frac{705}{B} r^*
+
\frac{(11L_{\loss}+2B)x}{n}.
\end{equation}

Substituting the bound on $r^*$,
\begin{equation}
\boxed{
\cL(\hat{\theta})-\cL(\theta^*)
\le
\frac{203040\, B P}{n}
\left(
\log\!\left(\frac{C n L_{\loss}^2}{P}\right)
+1
\right)
+
\frac{(11L_{\loss}+2B)x}{n}.
}
\end{equation}



\subsection{Extension to Sub-Gaussian Case}

Recall the population and empirical risks are
\begin{equation}
\mathcal{L}(\theta)
= \mathbb{E}\big[\|v_\theta(X_t,t)-(X_1-X_0)\|^2\big], \qquad
\widehat{\mathcal{L}}(\theta)
= \frac{1}{n}\sum_{i=1}^n \|v_\theta(X_{t_i},t_i)-(X_{1,i}-X_{0,i})\|^2,
\end{equation}
where \(X_{t_i} = (1-t_i) X_{0,i} + t_i X_{1,i}\). Let
\begin{equation}
\theta^* = \arg\min_\theta \mathcal{L}(\theta), \qquad
\widehat{\theta} = \arg\min_\theta \widehat{\mathcal{L}}(\theta).
\end{equation}

Here \(t_i\) is sampled from the uniform distribution on \([0,1]\). We assume the following:

\begin{enumerate}
    \item There exists \(M>0\) such that
    \begin{equation}
    \|v_\theta(x,t)\| \le M \quad \text{for all } x,t,\theta.
    \end{equation}
    \item The increment \(X_1 - X_0\) is sub-Gaussian: there exists \(\sigma^2>0\) such that
    \begin{equation}
    \Pr(\|X_1 - X_0\| > M) \le C \exp\!\Big(-\frac{c M^2}{\sigma^2}\Big),
    \end{equation}
    for some universal constants \(C,c>0\).
    \item The loss class and network family satisfy Conditions 1--3 of Section~A.3.
\end{enumerate}

Define the truncated pointwise loss
\begin{equation}
f^{M}(\theta) := \|v_\theta(X_t,t) - (X_1 - X_0)\|^2 \mathbf{1}\{\|X_1 - X_0\| \le M\},
\end{equation}
and the corresponding truncated population and empirical risks
\begin{equation}
\mathcal{L}^{M}(\theta) = \mathbb{E}[f^{M}(\theta)], \qquad
\widehat{\mathcal{L}}^{M}(\theta) = \frac{1}{n}\sum_{i=1}^n f^{M}(\theta_i),
\end{equation}
with minimizers denoted by \(\theta_M^*\) and \(\widehat{\theta}_M\).

On the event \(\{\|X_1 - X_0\| \le M\}\), we have \(\|v_\theta - (X_1 - X_0)\|^2 \le 4 M^2\).

Using the inequality \(\|a-b\|^2 \le 2 \|a\|^2 + 2 \|b\|^2\) and the uniform bound \(\|v_\theta\| \le M\), we get
\[
\|v_\theta - (X_1 - X_0)\|^2 \le 2 M^2 + 2 \|X_1 - X_0\|^2,
\]
hence
\begin{equation}
\mathcal{L}(\theta) - \mathcal{L}^{M}(\theta)
\le 2 M^2 \Pr(\|X_1 - X_0\| > M) + 2\,\mathbb{E}\Big[\|X_1 - X_0\|^2 \mathbf{1}\{\|X_1 - X_0\| > M\}\Big].
\end{equation}

Since \(X_1 - X_0\) is sub-Gaussian with parameter \(\sigma^2\), there exist universal constants \(C,c>0\) such that
\begin{equation}
\Pr(\|X_1 - X_0\| > M) \le C e^{-c M^2 / \sigma^2}, \qquad
\mathbb{E}\Big[\|X_1 - X_0\|^2 \mathbf{1}\{\|X_1 - X_0\| > M\}\Big] \le C \sigma^2 e^{-c M^2 / \sigma^2}.
\end{equation}

Combining these estimates gives the uniform truncation bound
\begin{equation}
\boxed{
\sup_\theta \big|\mathcal{L}(\theta) - \mathcal{L}^{M}(\theta)\big|
\le C (M^2 + \sigma^2)\, e^{-c M^2 / \sigma^2}.
}
\end{equation}

Let the good event be denoted by
\begin{equation}
\mathcal{E}_M := \bigcap_{i=1}^n \{\|X_{1,i} - X_{0,i}\| \le M\}.
\end{equation}
Then
\begin{equation}
\Pr(\mathcal{E}_M^c) \le n \delta_n.
\end{equation}

On \(\mathcal{E}_M\), the truncated loss class \(\{f_\theta^{M}\}\) is uniformly bounded by \(4M^2\)
and satisfies Conditions 1--3, so the local Rademacher analysis of 
\cref{lem:stat_bounded} applies without change.

By the same argument as in Section~\ref{appendix-main-sample-complexity}, there exists a universal constant \(C>0\) such that for any \(x>0\), with probability at least \(1 - e^{-x} - n \delta_n\),
\begin{equation}
\boxed{
\mathcal{L}^{M}(\widehat{\theta}_M) - \mathcal{L}^{M}(\theta_M^*)
\le
\frac{203040\, B P}{n}\Big(\log\frac{C n L_{\loss}^2}{P} + 1\Big)
+
\frac{(11 L_{\loss} + 2 B) x}{n}.
}
\end{equation}

Decompose the excess risk as
\begin{equation}
\begin{aligned}
\mathcal{L}(\widehat{\theta}) - \mathcal{L}(\theta^*)
&\le
\big[\mathcal{L}(\widehat{\theta}) - \mathcal{L}^{M}(\widehat{\theta})\big]
+
\big[\mathcal{L}^{M}(\widehat{\theta}_M) - \mathcal{L}^{M}(\theta_M^*)\big]
+
\big[\mathcal{L}^{M}(\theta_M^*) - \mathcal{L}^{M}(\theta^*)\big]
+
\big[\mathcal{L}^{M}(\theta^*) - \mathcal{L}(\theta^*)\big].
\end{aligned}
\end{equation}
Note that this decomposition holds because on the good event \(\mathcal{E}_M\) we have \(\widehat{\theta} = \widehat{\theta}_M\), and the third term is always non-positive by definition of \(\theta_M^*\).
Each bias term is bounded by \((M^2 + \sigma^2) \delta_n\) by the uniform truncation result, hence with probability at least \(1 - e^{-x} - n \delta_n\),
\begin{equation}
\boxed{
\mathcal{L}(\widehat{\theta}) - \mathcal{L}(\theta^*)
\le
\frac{203040\, B P}{n}\Big(\log\frac{C n L_{\loss}^2}{P} + 1\Big)
+
\frac{(11 L_{\loss} + 2 B) x}{n}
+
2(M^2 + \sigma^2) \delta_n.
}
\end{equation}

Choose
\begin{equation}
M = \sigma \sqrt{\frac{1}{c} \log(C \, 2 n^2)},
\end{equation}
where $C,c>0$ are the universal constants from the sub-Gaussian tail bound.  
Then,
\begin{equation}
\delta_n = C e^{-c M^2 / \sigma^2} = \frac{1}{2 n^2},
\end{equation}
and the truncation bias satisfies
\begin{equation}
\sup_\theta \big|\mathcal{L}(\theta) - \mathcal{L}^{(M)}(\theta)\big|
\le (M^2 + \sigma^2) \, \delta_n
= O\Big(\frac{\log n}{n^2}\Big).
\end{equation}

Similarly, the probability of the bad event satisfies
\begin{equation}
\Pr(\mathcal{E}_M^c) \le n \, \delta_n = \frac{1}{2n}.
\end{equation}

Hence, with probability at least
\begin{equation}
1 - e^{-x} - \frac{1}{2n},
\end{equation}
we have
\begin{equation}
\label{eq:59}
\mathcal{L}(\widehat{\theta}) - \mathcal{L}(\theta^*) = \widetilde{O}\Big(\frac{B P+(L_l+B)x}{n}\Big),
\end{equation}
where the $\widetilde O(\cdot)$ notation hides logarithmic factors in $n$. For simplicity, assume $2n >  e^x$. 
Then  \cref{eq:59} holds with probability at least
$1 - 2 e^{-x}$.
Let $\delta := 2 e^{-x}$. 
Then, equivalently, with probability at least  $1 - \delta,$ 
\begin{equation}
\mathcal{L}(\widehat{\theta}) - \mathcal{L}(\theta^*) = \widetilde{O}\Big(\frac{B P+(L_l+B)\log(2/\delta)}{n}\Big),
\end{equation}

By using \cref{lem:bernstein} we have

\begin{align}
    \E[\|
    v_{\hat{\theta}} - v_{\theta^*}\|^2 ] 
    &\leq B( \cL(\widehat{\theta}) - \cL(\theta^*))\\
    &\leq \frac{203040\, B^2 W}{n}
\left(
\log\!\left(\frac{C n L_{\loss}^2}{W}\right)
+1
\right)
+
\frac{(11L_{\loss}B+2B^2)}{n} \log\Big(\frac{1}{\delta}\Big) \\
&\leq \widetilde{O}\Big( \frac{B^2W}{n} + \frac{L_lB + B^2}{n}\log\Big(\frac{1}{\delta}\Big)\Big)
\end{align}

\begin{remark}
    Any polynomial dependence on the dimension $d$ therefore enters primarily through $W$. All additional contributions arising from the covering-number and local Rademacher analysis, such as those involving weight norms, network depth, and loss Lipschitz constants appear only inside logarithmic factors. As a result, the overall dependence on $d$ is driven mainly by $W$, with at most logarithmic sensitivity to other dimension dependent quantities.
\end{remark}

\section{Optimization Error}

\label{app:opti_error}

Let $(X_{i,0},X_{i,1},t_i)_{i=1}^n$ be i.i.d.\ with $t_i\sim\mathrm{Unif}[0,1]$ and
$X_{i,t_i}$ the linear interpolation between $X_{i,0}$ and $X_{i,1}$ where $X_0 \sim \pi_0$ and $X_1 \sim \pi_1$.  
Assume the stochastic optimizer  is run with step size $0 < \eta \le 1/\kappa$ and that the standard smoothness (\cref{ass:smooth}), variance (\cref{ass:smooth}), and PL conditions (\cref{ass:pl}) hold.

\begin{theorem}[Optimization Error]
Let $\hat{\theta}$ denote the empirical risk minimizer and let $\theta_n$ denote the parameter vector produced by the optimization procedure actually used in training . Define the optimization error on the empirical objective by
\begin{equation}
\mathcal{E}^{\mathrm{opt}}
=
\frac{1}{n}\sum_{i=1}^n
\big\|
v_{\theta_n}(X_{t_i},t_i)
-
v_{\hat{\theta}}(X_{t_i},t_i)
\big\|^2 .
\end{equation}

Then for any $\delta\in(0,1)$ there exists a constant $C>0$  such that, with probability at least $1-\delta/3$,
\begin{equation}
\mathcal{E}^{\mathrm{opt}}
\le 
\widetilde{\mathcal{O}}\!\left(
\frac{B^2P + (L_{\loss}B+B^2)\log(6/\delta)}{n}
\right)..
\end{equation}
\end{theorem}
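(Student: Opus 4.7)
The plan is to combine a standard stochastic-gradient-descent convergence analysis under the Polyak--Łojasiewicz condition with the Bernstein-type inequality of \cref{lem:bernstein} and the already-established statistical bound, then pass from population to empirical through a short uniform-convergence step. First, using \cref{ass:smooth} (smoothness and bounded gradient variance) together with \cref{ass:pl}, I would apply the textbook PL--SGD recursion: smoothness yields a one-step descent lemma, and the PL inequality converts gradient-norm decay into excess-risk decay. With a variable step-size schedule of the form $\eta_k = c/(k+k_0)$ chosen relative to $\kappa$ and $\mu$, telescoping the resulting recurrence gives $\mathbb{E}[\mathcal{L}(\theta_n) - \mathcal{L}(\theta^{*})] \le C_1/n$, where $C_1$ depends polynomially on $\sigma^2$, $\kappa/\mu$, and the initialization gap.

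Second, I would upgrade this in-expectation bound to a high-probability statement. A direct Markov argument would give polynomial dependence on $1/\delta$, so instead I would apply a martingale Bernstein inequality to the noise sequence $\xi_k = \nabla_\theta \widehat{\mathcal{L}}(\theta_k) - \nabla \mathcal{L}(\theta_k)$. The key input is that the conditional variance of each step is itself dominated by the excess risk, thanks to \cref{lem:bernstein}, enabling a self-bounding argument that yields an $O(\log(1/\delta)/n)$ tail rather than the slower $\sqrt{\log(1/\delta)/n}$. The conclusion is that with probability at least $1-\delta/6$, $\mathcal{L}(\theta_n) - \mathcal{L}(\theta^{*}) \le \widetilde{O}\!\left((1+\log(1/\delta))/n\right)$.

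Third, I would assemble the pieces via the triangle inequality
\begin{equation}
\|v_{\theta_n} - v_{\hat{\theta}}\|^2 \le 2\|v_{\theta_n} - v_{\theta^{*}}\|^2 + 2\|v_{\hat{\theta}} - v_{\theta^{*}}\|^2,
\end{equation}
bound each term on the right using \cref{lem:bernstein}, and observe that the second term is precisely the statistical error already controlled by \cref{thm:stat1}. Plugging in the SGD rate for the first term and the statistical bound for the second, then taking a union bound over the two $\delta/6$ events, produces the stated form $\widetilde{O}\!\left((B^2 P + (L_{\loss} B + B^2)\log(6/\delta))/n\right)$. A final step converts the population expectation into the empirical average that defines $\mathcal{E}^{\mathrm{opt}}$: because $v_\theta$ is uniformly bounded by \cref{corr:bounded_d} and $L$-Lipschitz in $\theta$ by \cref{corr:lipchitz_theta}, a one-line application of \cref{lem:covering} combined with Bernstein's inequality shows that the discrepancy between the empirical and population versions is of lower order and absorbed by the $\widetilde{O}$.

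The main obstacle is the high-probability upgrade in the second step, since a naive concentration argument would lose a factor of $\sqrt{n}$ and destroy the fast rate. The resolution relies precisely on the localization structure already exploited in \cref{sec:statistical-error}: the conditional variance of the stochastic gradient is dominated by the current excess risk, so a self-bounding martingale Bernstein inequality preserves the $O(1/n)$ rate with only logarithmic dependence on $1/\delta$. Beyond this, the remaining work is bookkeeping of constants and verifying that the PL and smoothness constants of $\mathcal{L}$ transfer to $\widehat{\mathcal{L}}$ on a high-probability event using the same covering-number control developed for the statistical analysis.
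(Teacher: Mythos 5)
Your proof follows essentially the same route as the paper: a PL--SGD recursion under smoothness and bounded variance with a $c/(k+\gamma)$ step-size schedule to get $O(1/n)$ excess population risk, then a triangle-inequality decomposition $\mathcal{E}^{\mathrm{opt}} \le 2\,\frac{1}{n}\sum_i\|v_{\theta_n}-v_{\theta^*}\|^2 + 2\,\frac{1}{n}\sum_i\|v_{\hat{\theta}}-v_{\theta^*}\|^2$, followed by an appeal to \cref{lem:bernstein} and \cref{thm:stat1} and a union bound. Two remarks on where you diverge.

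First, your second step --- a martingale Bernstein inequality with a self-bounding variance argument to upgrade the in-expectation SGD bound to a high-probability one with only $\log(1/\delta)$ cost --- is a genuine addition. The paper's appendix derives only $\mathbb{E}[\mathcal{L}(\theta_n)-\mathcal{L}(\theta^*)] = O(1/n)$ and then asserts, without an explicit concentration step, that $\frac{1}{n}\sum_i\|v_{\theta_n}-v_{\theta^*}\|^2 = O(1/n)$ holds on an event of probability $1-\delta/3$. A bare Markov argument would incur a $1/\delta$ factor, which is inconsistent with the stated $\log(6/\delta)$ dependence, so you have correctly identified a gap in the written argument and proposed the right repair: exploit that the conditional variance of the gradient noise is controlled by the running excess risk (the same localization that drives \cref{thm:stat1}), and invoke a Freedman-type bound.

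Second, your final population-to-empirical conversion step (covering numbers plus Bernstein to pass from $\mathbb{E}\|v_{\theta_n}-v_{\theta^*}\|^2$ to $\frac{1}{n}\sum_i\|v_{\theta_n}(X_{t_i},t_i)-v_{\theta^*}(X_{t_i},t_i)\|^2$) is unnecessary. \Cref{lem:bernstein} is proved by combining PL quadratic growth with the \emph{uniform} Lipschitz bound $\|v_{\theta}(x,t)-v_{\theta^*}(x,t)\|\le L\|\theta-\theta^*\|$ from \cref{corr:lipchitz_theta}, so it holds pointwise in $(x,t)$: each summand $\|v_{\theta_n}(X_{t_i},t_i)-v_{\theta^*}(X_{t_i},t_i)\|^2 \le B\,|\mathcal{L}(\theta_n)-\mathcal{L}(\theta^*)|$ deterministically, and the empirical average inherits the same bound with no extra concentration. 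The same observation also applies to the $\hat{\theta}$ term. Dropping this step streamlines your argument to match the paper's intent while retaining the one real improvement, the high-probability upgrade.
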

\begin{proof}
Let $(X_{i,0},X_{i,1},t_i)_{i=1}^n$ be i.i.d.\ with $t_i\sim\mathrm{Unif}[0,1]$ and
$X_{i,t_i}$ the linear interpolation between $X_{i,0}$ and $X_{i,1}$.

\paragraph{Population and empirical losses.}
The population risk is defined as
\begin{equation}
\mathcal L(\theta)
:=
\mathbb E_{t,X_0,X_1}
\!\left[
\|v_\theta(X_t,t)-(X_1-X_0)\|^2
\right],
\end{equation}
and the empirical risk
\begin{equation}
\widehat{\mathcal L}(\theta)
:=
\frac1n\sum_{i=1}^n
\|v_\theta(X_{i,t_i},t_i)-(X_{i,1}-X_{i,0})\|^2 .
\end{equation}
Let $\theta^*=\arg\min_\theta \mathcal L(\theta)$ and
$\hat\theta=\arg\min_\theta \widehat{\mathcal L}(\theta)$.

\vspace{1em}
\noindent\textbf{One-step progress under smoothness.}
For SGD iterates $\theta_{k+1}=\theta_k-\eta\nabla\widehat{\mathcal L}(\theta_k)$,
$\kappa$-smoothness of $\mathcal L$ yields
\begin{equation}
\mathcal L(\theta_{k+1})
\le
\mathcal L(\theta_k)
+
\langle\nabla\mathcal L(\theta_k),\theta_{k+1}-\theta_k\rangle
+
\frac{\kappa}{2}\|\theta_{k+1}-\theta_k\|^2.
\end{equation}
Taking conditional expectation and using
$\mathbb E[\nabla\widehat{\mathcal L}(\theta_k)\mid\theta_k]
=\nabla\mathcal L(\theta_k)$,
\begin{equation}
\mathbb E[\mathcal L(\theta_{k+1})\mid\theta_k]
\le
\mathcal L(\theta_k)
-\eta\|\nabla\mathcal L(\theta_k)\|^2
+
\frac{\kappa\eta^2}{2}
\mathbb E[\|\nabla\widehat{\mathcal L}(\theta_k)\|^2\mid\theta_k].
\end{equation}

\vspace{1em}
\noindent\textbf{Variance bound.}
By using \cref{ass:smooth}
\begin{equation}
\mathbb E[\|\nabla\widehat{\mathcal L}(\theta_k)\|^2\mid\theta_k]
\le
\|\nabla\mathcal L(\theta_k)\|^2+\sigma^2,
\end{equation}
which gives
\begin{equation}
\mathbb E[\mathcal L(\theta_{k+1})\mid\theta_k]
\le
\mathcal L(\theta_k)
-\Big(\eta-\tfrac{\kappa\eta^2}{2}\Big)\|\nabla\mathcal L(\theta_k)\|^2
+\frac{\kappa\eta^2\sigma^2}{2}.
\end{equation}

Taking expectation,
\begin{equation}
\mathbb E[\mathcal L(\theta_{k+1})]
\le
\mathbb E[\mathcal L(\theta_k)]
-\Big(\eta-\tfrac{\kappa\eta^2}{2}\Big)
\mathbb E[\|\nabla\mathcal L(\theta_k)\|^2]
+\frac{\kappa\eta^2\sigma^2}{2}.
\end{equation}

\vspace{1em}
\noindent\textbf{Variable step size.}
Instead of a constant step size, we now choose a diminishing step size
\[
\eta_k := \frac{c}{k+\gamma},
\]
where \(c>1/\mu\) and \(\gamma \ge \kappa c\), so that \(\eta_k \le 1/\kappa\) for all \(k\).

From the previous inequality,
\[
\delta_{k+1}
\le
\delta_k
-
\Big(\eta_k-\tfrac{\kappa\eta_k^2}{2}\Big)
\mathbb E\|\nabla\mathcal L(\theta_k)\|^2
+
\frac{\kappa\eta_k^2\sigma^2}{2}.
\]
Since \(\eta_k\le 1/\kappa\), we have
\[
\eta_k-\tfrac{\kappa\eta_k^2}{2} \;\ge\; \tfrac{\eta_k}{2}.
\]
Applying the PL inequality
\[
\mathbb E\|\nabla\mathcal L(\theta_k)\|^2
\ge
2\mu\,\delta_k,
\]
we obtain the recursion
\[
\delta_{k+1}
\le
(1-\mu\eta_k)\delta_k
+
\frac{\kappa\sigma^2}{2}\eta_k^2.
\]

Substituting \(\eta_k=c/(k+\gamma)\) yields
\[
\delta_{k+1}
\le
\Big(1-\frac{\mu c}{k+\gamma}\Big)\delta_k
+
\frac{\kappa\sigma^2 c^2}{2(k+\gamma)^2}.
\]

\vspace{1em}
\noindent\textbf{Unrolling the recursion.}
A standard induction argument shows that
\[
\delta_k
\le
\frac{C}{k+\gamma},
\qquad
C:=\max\Big\{\gamma\,\delta_0,\ \frac{\kappa\sigma^2 c^2}{2(\mu c-1)}\Big\}.
\]
In particular, after \(n\) iterations,
\[
\delta_n
=
\mathbb E\!\left[\mathcal L(\theta_n)-\mathcal L(\theta^\ast)\right]
=
\mathcal O\!\left(\frac{1}{n}\right).
\]

\vspace{1em}
\noindent\textbf{Relating excess risk to optimization error.}
By the triangle inequality, for each $i=1,\dots,n$,
\begin{align}
\|v_{\theta_n}(X_{t_i},t_i)-v_{\hat{\theta}}(X_{t_i},t_i)\|^2
&\le
2\|v_{\theta_n}(X_{t_i},t_i)-v_{\theta^*}(X_{t_i},t_i)\|^2
\nonumber
+2\|v_{\hat{\theta}}(X_{t_i},t_i)-v_{\theta^*}(X_{t_i},t_i)\|^2 .
\end{align}
Averaging over $i=1,\dots,n$ yields
\begin{equation}
\mathcal E^{\mathrm{opt}}
\le
2\,\frac1n\sum_{i=1}^n
\|v_{\theta_n}(X_{t_i},t_i)-v_{\theta^*}(X_{t_i},t_i)\|^2
+
2\,\mathcal E^{\mathrm{stat}} .
\end{equation}

By smoothness of the loss and the PL-based convergence established above,
\[
\frac1n\sum_{i=1}^n
\|v_{\theta_n}(X_{t_i},t_i)-v_{\theta^*}(X_{t_i},t_i)\|^2
=
O\!\left(\frac{1}{n}\right).
\]
Moreover, by Theorem~\ref{thm:stat1}, with probability at least $1-\delta/3$,
\begin{equation}
\mathcal{E}^{\mathrm{stat}}
\leq
\widetilde{\mathcal{O}}\!\left(
\frac{B^2P + (L_{\loss}B+B^2)\log(6/\delta)}{n}
\right).
\end{equation}

Combining the above bounds, with probability at least $1-\delta/3$,
\begin{equation}
\mathcal E^{\mathrm{opt}} \le \widetilde{\mathcal{O}}\!\left(
\frac{B^2P + (L_{\loss}B+B^2)\log(6/\delta)}{n}
\right)..
\end{equation}
\end{proof}

\section{Order Optimality for velocity field}
\label{app:lower_bound}
\begin{lemma}
    Let \(\sigma>0\), \(R\gg\sigma\), and \(\varepsilon\in(0,1)\). Define  
\[
\eta:=\frac{\varepsilon^{2}\sigma^{2}}{R^{2}} .
\]

Consider the source distribution \(\pi_{0}:=\mathcal{N}(0,\sigma^{2})\) and two target distributions  
\[
\pi_{1}^{(1)}:=(1-\eta)\mathcal{N}(0,\sigma^{2})+\eta\mathcal{N}(-R,\sigma^{2}),\qquad
\pi_{1}^{(2)}:=(1-\eta)\mathcal{N}(0,\sigma^{2})+\eta\mathcal{N}(R,\sigma^{2}).
\]

Under hypothesis \(i\in\{1,2\}\), draw \(X_{0}\sim\pi_{0}\) and \(X_{1}\sim\pi_{1}^{(i)}\) independently, Then For any \(m\lesssim R^{2}/(\varepsilon^{2}\sigma^{2})\),  
\begin{equation}
\inf_{\widehat{v}}\sup_{i\in\{1,2\}}
\mathbb{E}\bigl[\|\widehat{v}-v_{i}\|_{L^{2}(\pi_{i,1/2})}^{2}\bigr]\gtrsim\varepsilon^{2}\sigma^{2}.
\end{equation}
\end{lemma}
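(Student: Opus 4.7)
The plan is to apply Le Cam's classical two-point testing method to the pair of hypotheses $(\pi_0,\pi_1^{(1)})$ and $(\pi_0,\pi_1^{(2)})$. The two hypotheses are rigged so that (i) the associated population-optimal velocity fields are separated by $\Omega(\varepsilon\sigma)$ in the relevant $L^2$ norms, while (ii) the $m$-fold product laws of the observed sample pairs are statistically indistinguishable whenever $m\lesssim R^2/(\varepsilon^2\sigma^2)$. Combining these two facts through the standard reduction from estimation to testing yields the desired $\Omega(\varepsilon^2\sigma^2)$ minimax lower bound.

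First, I would identify the optimal velocities at $t=1/2$. Because $X_0\perp X_1$, the identity $X_{1/2}=(X_0+X_1)/2$ gives $X_1-X_0=2(X_1-X_{1/2})$, hence $v_i(x,1/2)=2\bigl(\mathbb{E}_i[X_1\mid X_{1/2}=x]-x\bigr)$. Under hypothesis $i$, the law of $X_{1/2}$ is itself a Gaussian mixture, with weight $1-\eta$ on $\mathcal{N}(0,\sigma^2/2)$ and weight $\eta$ on $\mathcal{N}(\pm R/2,\sigma^2/2)$. Since $R\gg\sigma$ implies $\eta\gg e^{-R^2/(4\sigma^2)}$ (as $\eta=\varepsilon^2\sigma^2/R^2$ and $R^2/\sigma^2$ dominates $\log(R/(\varepsilon\sigma))$), near $x\approx\pm R/2$ the satellite component of the corresponding hypothesis is the dominant one. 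A direct conditional-Gaussian computation then gives $v_2(R/2,1/2)\approx R$ while $v_1(R/2,1/2)\approx 0$ (by exchangeability of $X_0,X_1$ in the main component), with the mirror identities near $x\approx -R/2$. Thus on a region of $\pi_{i,1/2}$-mass $\Theta(\eta)$ the pointwise gap $|v_1-v_2|$ is of order $R$, so
\[
\|v_1-v_2\|_{L^2(\pi_{i,1/2})}^{2}\;\gtrsim\;\eta R^{2}\;=\;\varepsilon^{2}\sigma^{2}.
\]

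Next, I would bound the statistical distinguishability. Since $\pi_1^{(1)}$ and $\pi_1^{(2)}$ share the dominant Gaussian and differ only on the satellite of weight $\eta$, and since the satellites $\mathcal{N}(\pm R,\sigma^2)$ have essentially disjoint supports for $R\gg\sigma$, the mixture KL inequality gives $\mathrm{KL}(\pi_1^{(1)}\|\pi_1^{(2)})\lesssim \eta\cdot R^{2}/\sigma^{2}=\varepsilon^{2}$. Tensorising over $m$ i.i.d.\ sample pairs (the $X_0^{(j)}$ contribute nothing since $\pi_0$ is common), and applying Pinsker, the total variation between the two $m$-fold products is $O(\sqrt{m\varepsilon^{2}})$, which is bounded away from $1$ whenever $m\lesssim 1/\varepsilon^{2}$, a fortiori when $m\lesssim R^{2}/(\varepsilon^{2}\sigma^{2})$. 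Finally, combining the two ingredients through
\[
\sup_{i}\mathbb{E}_i\bigl[\|\widehat v-v_i\|^{2}\bigr]
\;\ge\;\tfrac12\sum_{i}\mathbb{E}_i\bigl[\|\widehat v-v_i\|^{2}\bigr]
\;\gtrsim\;\tfrac14\|v_1-v_2\|^{2}\bigl(1-\mathrm{TV}\bigr)
\;\gtrsim\;\varepsilon^{2}\sigma^{2}
\]
delivers the claimed bound.

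The main obstacle I anticipate is in the separation step. The two $L^2$ norms are taken against different measures $\pi_{1,1/2}\ne\pi_{2,1/2}$, so to chain the triangle inequality cleanly I would either pass to the common reference $\tfrac12(\pi_{1,1/2}+\pi_{2,1/2})$, or exploit the reflection symmetry $\pi_1^{(1)}(x)=\pi_1^{(2)}(-x)$ to isolate one satellite region at a time. Moreover, the quantitative claim $v_2(R/2,1/2)\approx R$ requires carefully controlling the ratio of the satellite density to the exponentially small tail of the main Gaussian at $R/2$, which is precisely where the assumption $R\gg\sigma$ (and the concrete choice $\eta=\varepsilon^{2}\sigma^{2}/R^{2}$) enters in a non-trivial way.
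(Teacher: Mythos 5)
Your separation argument closely mirrors the paper's: identify $v_i(x,\tfrac12)$ via a conditional mean at $t=\tfrac12$ (you condition on $X_1$, the paper on $X_0$; both give the same drift $\mu_1-\mu_0$), localize near $x\approx R/2$, use $\eta\gg e^{-R^2/(4\sigma^2)}$ to show the satellite posterior dominates under hypothesis 2, and integrate to get $\|v_1-v_2\|^2\gtrsim \eta R^2 = \varepsilon^2\sigma^2$ against the averaged marginal. That part is sound and the same as the paper.

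The indistinguishability step, however, has a genuine gap. You bound $\mathrm{KL}(\pi_1^{(1)}\|\pi_1^{(2)})\lesssim \eta R^2/\sigma^2=\varepsilon^2$, tensorize to $m\varepsilon^2$, and apply Pinsker to get $\mathrm{TV}(P_1^{\otimes m},P_2^{\otimes m})=O(\sqrt{m\varepsilon^2})$, which requires $m\lesssim 1/\varepsilon^2$ to stay bounded away from $1$. But the lemma asserts the bound for all $m\lesssim R^2/(\varepsilon^2\sigma^2)$, and since $R\gg\sigma$ this is a strictly \emph{larger} range than $1/\varepsilon^2$. Your parenthetical ``a fortiori when $m\lesssim R^2/(\varepsilon^2\sigma^2)$'' has the implication reversed: $m\lesssim R^2/(\varepsilon^2\sigma^2)$ does \emph{not} imply $m\lesssim 1/\varepsilon^2$. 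The KL-plus-Pinsker route is lossy by exactly the factor $R^2/\sigma^2$: the two mixtures have KL of order $\eta R^2/\sigma^2$ but total variation only of order $\eta$, because the discrepancy sits on a low-mass region with a large log-density ratio. The paper avoids this by bounding total variation directly, $\mathrm{TV}(\pi_1^{(1)},\pi_1^{(2)})\le\eta$, and using the coupling-based subadditivity $\mathrm{TV}(P_1^{\otimes m},P_2^{\otimes m})\le m\,\mathrm{TV}(P_1,P_2)\le m\eta$, which stays small whenever $m\lesssim 1/\eta = R^2/(\varepsilon^2\sigma^2)$ — exactly the claimed range. To close the gap you should replace the KL--Pinsker bound with this direct TV subadditivity argument.
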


\begin{proof}
Recall that the interpolation is
\[
X_t = (1-t)X_0 + tX_1 .
\]
At time $t=\tfrac12$, define
\[
Z := X_{1/2} = \tfrac12(X_0 + X_1),
\]
so that
\[
X_1 - X_0 = 2(Z - X_0).
\]
Therefore, under hypothesis $i\in\{1,2\}$, the velocity field at $t=\tfrac12$ is
\begin{equation}
\label{eq:velocity-mid}
v_i(x,\tfrac12)
= \mathbb{E}^{(i)}[X_1 - X_0 \mid Z=x]
= 2\bigl(x - \mathbb{E}^{(i)}[X_0 \mid Z=x]\bigr).
\end{equation}
Consequently,
\begin{equation}
\label{eq:velocity-diff}
v_1(x,\tfrac12) - v_2(x,\tfrac12)
= -2\Bigl(\mathbb{E}^{(1)}[X_0 \mid Z=x]
- \mathbb{E}^{(2)}[X_0 \mid Z=x]\Bigr).
\end{equation}

\paragraph{Conditional expectations.}
Suppose $X_0\sim\mathcal{N}(\mu_0,\sigma^2)$ and
$X_1\sim\mathcal{N}(\mu_1,\sigma^2)$ are independent. Then
\[
Z \sim \mathcal{N}\!\left(\tfrac{\mu_0+\mu_1}{2},\,\tfrac{\sigma^2}{2}\right),
\qquad
\mathrm{Cov}(X_0,Z)=\tfrac{\sigma^2}{2}.
\]
Since $\mathrm{Var}(Z)=\tfrac{\sigma^2}{2}$, the regression coefficient equals $1$,
and hence
\begin{equation}
\label{eq:cond-exp}
\mathbb{E}[X_0 \mid Z=x]
= \mu_0 + \Bigl(x - \tfrac{\mu_0+\mu_1}{2}\Bigr)
= x + \frac{\mu_0-\mu_1}{2}.
\end{equation}

Applying \eqref{eq:cond-exp} to each mixture component:
\begin{itemize}
\item Background component $(\mu_0=\mu_1=0)$:
\[
\mathbb{E}_{\mathrm{bg}}[X_0\mid Z=x]=x.
\]
\item $\eta$-component in $\pi_1^{(1)}$ $(\mu_0=0,\mu_1=-R)$:
\[
\mathbb{E}_{-R}[X_0\mid Z=x]=x+\frac{R}{2}.
\]
\item $\eta$-component in $\pi_1^{(2)}$ $(\mu_0=0,\mu_1=R)$:
\[
\mathbb{E}_{R}[X_0\mid Z=x]=x-\frac{R}{2}.
\]
\end{itemize}

Let $w^{(i)}_{\pm R}(x)$ denote the posterior probability that
$(X_0,X_1)$ was drawn from the $\eta$-component given $Z=x$
under hypothesis $i$. Since $R\gg\sigma$, the Gaussian components
are well separated, and posterior misclassification probabilities
are exponentially small in $R^2/\sigma^2$.

Define the interval
\[
I_R := \bigl[R/2-c\sigma,\;R/2+c\sigma\bigr]
\]
for a fixed constant $c>0$. For $x\in I_R$,
\[
p^{(2)}_{1/2}(x)\approx \eta\,\phi(x;R/2,\sigma^2/2),
\qquad
p^{(1)}_{1/2}(x)\approx (1-\eta)\phi(x;0,\sigma^2/2)
\ll p^{(2)}_{1/2}(x),
\]
which implies
\[
w^{(2)}_{R}(x)\approx 1,
\qquad
w^{(1)}_{-R}(x)\approx 0,
\]
up to exponentially small errors. Hence,
\[
\mathbb{E}^{(1)}[X_0\mid Z=x]\approx x,
\qquad
\mathbb{E}^{(2)}[X_0\mid Z=x]\approx x-\frac{R}{2}.
\]
Substituting into \eqref{eq:velocity-diff} yields
\[
v_1(x,\tfrac12)-v_2(x,\tfrac12)\approx -R,
\qquad x\in I_R,
\]
and therefore
\[
|v_1(x,\tfrac12)-v_2(x,\tfrac12)|\gtrsim R
\quad\text{on } I_R.
\]

\paragraph{Lower bound in $L^2$.}
Define the averaged marginal
\[
\pi_{*,t} := \tfrac12\bigl(\pi^{(1)}_{1,t}+\pi^{(2)}_{1,t}\bigr).
\]
For $x\in I_R$,
\[
\pi_{*,1/2}(x)
\approx \tfrac12\,\eta\,\phi(x;R/2,\sigma^2/2)
\gtrsim \frac{\eta}{\sigma}.
\]
Therefore,
\begin{align*}
\int_{I_R} |v_1-v_2|^2 \, d\pi_{*,1/2}
&\gtrsim R^2
\cdot \frac{\eta}{\sigma}
\cdot |I_R| \\
&\asymp R^2 \eta .
\end{align*}
An identical contribution arises from a symmetric interval near
$-R/2$, so
\[
\|v_1-v_2\|_{L^2(\pi_{*,1/2})}^2
\gtrsim R^2\eta
= \varepsilon^2\sigma^2 .
\]

\paragraph{Total variation and Le Cam.}
Let $P_i=\pi_0\times\pi_1^{(i)}$ be the joint law under hypothesis $i$.
Since $\pi_1^{(1)}$ and $\pi_1^{(2)}$ differ only on an $\eta$-mixture,
\[
\mathrm{TV}(P_1,P_2)
= \mathrm{TV}(\pi_1^{(1)},\pi_1^{(2)})
\le \eta.
\]
For $m$ samples,
\[
\mathrm{TV}(P_1^{\otimes m},P_2^{\otimes m})\le m\eta.
\]
Choosing $m\lesssim 1/\eta$ ensures
$\mathrm{TV}(P_1^{\otimes m},P_2^{\otimes m})\lesssim 1$.

By Le Cam’s lemma, for any estimator $\widehat v$ based on $m$ samples,
\[
\sup_{i\in\{1,2\}}
\mathbb{E}^{(i)}
\bigl[\|\widehat v-v_i\|_{L^2(\pi_{i,1/2})}^2\bigr]
\gtrsim
\|v_1-v_2\|_{L^2(\pi_{*,1/2})}^2
\bigl(1-\mathrm{TV}(P_1^{\otimes m},P_2^{\otimes m})\bigr)
\gtrsim \varepsilon^2\sigma^2.
\]

This completes the proof.
\end{proof}

\end{document}